\documentclass[12pt]{article}

\usepackage{times}
\usepackage{amsfonts}
\usepackage{amsmath}
\usepackage[psamsfonts]{amssymb}
\usepackage{latexsym}
\usepackage{color}
\usepackage{graphics}
\usepackage{enumerate}
\usepackage{amstext}
\usepackage{url}
\usepackage{epsfig}
\usepackage{graphicx} 
\usepackage{amsmath, amssymb, amsthm,wasysym}
\usepackage{color}
\usepackage{times}

\usepackage{graphicx}

\newtheorem{theorem}{Theorem}   
\newtheorem{lemma}[theorem]{Lemma}
\newtheorem{corollary}[theorem]{Corollary}
\newtheorem{definition}{Definition}
\newtheorem{remark}{Remark}

\topmargin -0.5in
\oddsidemargin 0in
\evensidemargin 0in
\textheight 8.9in
\textwidth 6.5in
\parskip 0.0in

\newcommand{\figref}[1]{Figure~\ref{#1}}
\newcommand{\secref}[1]{Section~\ref{#1}}
\newcommand{\thmref}[1]{Theorem~\ref{#1}}
\newcommand{\lemref}[1]{Lemma~\ref{#1}}

\newcommand{\corref}[1]{Corollary~\ref{#1}}

\newcommand{\by}{\boldsymbol{y}}

\newcommand{\fork}{\textsc{fork}}

\newcommand{\scO}{\mathcal{O}}

\newcommand{\scP}{\mathcal{P}}
\newcommand{\scY}{\mathcal{Y}}

\newcommand{\opt}{\textsc{opt}}

\newcommand{\Psiinv}{\Psi^{*}}

\newcommand{\dt}{\displaystyle}

\newcommand{\sel}{\textsc{sel}}

\newcommand{\pred}{\textsc{pred}}

\newcommand{\lb}{\mbox{\textsc{lb}}}

\newcommand{\scT}{\mathcal{T}}
\newcommand{\yhat}{\widehat{y}}
\newcommand{\treeopt}{\textsc{TreeOpt}}

\newcommand{\bto}{\textsc{bto}}
\newcommand{\inbd}[1]{\underline{\partial{#1}}}

\begin{document}

\title{Active Learning on Trees and Graphs}

\author{
Nicol\`o Cesa-Bianchi\\ 
Dipartimento di Informatica, Universit\`a degli Studi di Milano, Italy\\
\texttt{nicolo.cesa-bianchi@unimi.it}
\and
Claudio Gentile\\ 
DiSTA, Universit\`a dell'Insubria, Italy\\
\texttt{claudio.gentile@uninsubria.it}
\and
Fabio Vitale\\ 
Dipartimento di Informatica, Universit\`a degli Studi di Milano, Italy\\
\texttt{fabio.vitale@unimi.it}
\and
Giovanni Zappella\\
Dipartimento di Matematica, Universit\`a degli Studi di Milano, Italy\\
\texttt{giovanni.zappella@unimi.it}
}

\maketitle

\begin{abstract}
We investigate the problem of active learning on a given tree
whose nodes are assigned binary labels in an adversarial way.
Inspired by recent results by Guillory and Bilmes, we characterize
(up to constant factors) the optimal placement of queries
so to minimize the mistakes made on the non-queried nodes.
Our query selection algorithm is extremely efficient, and
the optimal number of mistakes on the non-queried nodes is
achieved by a simple and efficient mincut classifier.
Through a simple modification of the query selection algorithm
we also show optimality (up to constant factors) with respect
to the trade-off between number of queries and number of mistakes
on non-queried nodes.
By using spanning trees, our algorithms can be efficiently applied
to general graphs, although the problem of finding optimal and
efficient active learning algorithms for general graphs remains open.
Towards this end, we provide a lower bound on the number of
mistakes made on arbitrary graphs by any active learning
algorithm using a number of queries which is up to a constant 
fraction of the graph size.
\end{abstract}

\section{Introduction}
\label{s:intro}
%
The abundance of networked data in various application domains (web, social
networks, bioinformatics, etc.) motivates the development of scalable and accurate
graph-based prediction algorithms. An important topic in this area is the graph
binary classification problem:
Given a graph
with unknown binary labels on its nodes,
the learner receives the labels on a subset of the nodes (the training set)
and must predict the labels on the remaining vertices. This is typically done
by relying on some notion of label regularity depending on the graph topology,
such as that nearby nodes are likely to be labeled similarly.
Standard approaches to this problem predict with the assignment of labels
minimizing the induced cutsize~(e.g., \cite{BC01,BLRR04}), or by binarizing the assignment
that minimizes certain real-valued extensions of the cutsize function~(e.g., \cite{ZGL03,BMN04,BDL06}
and references therein).

In the active learning version of this problem the learner is allowed to choose
the subset of training nodes.
Similarly to standard feature-based learning, one expects active methods to
provide a significant boost of predictive ability compared to a noninformed
(e.g., random) draw of the training set.
The following simple example provides some intuition of why this could happen
when the labels are chosen by an adversary, which is the setting
considered in this paper.
Consider a ``binary star system'' of two star-shaped graphs whose centers are
connected by a bridge, where one star is a constant fraction bigger than the other.
The adversary draws two random binary labels and assigns the first label to
all nodes of the first star graph, and the second label to all nodes of the
second star graph. Assume that the training set size is two.
If we choose the centers of the two stars and predict with a mincut strategy,\footnote
{
A mincut strategy considers all labelings consistent with the labels observed so far,
and chooses among them one that minimizes the resulting cutsize over the whole graph.
}
we are guaranteed to make zero mistakes on all unseen vertices. On the other
hand, if we query two nodes at random, then with constant probability both
of them will belong to the bigger star, and all the unseen labels of the smaller 
star will be mistaken.
This simple example shows that the gap between the performance of passive and 
active learning on graphs can be
made arbitrarily big.

In general, one would like to devise a strategy for placing a certain budget
of queries on the vertices of a given graph. This should be done so as to
minimize the number of mistakes made on the non-queried nodes by some
reasonable classifier like mincut. This question has been
investigated from a theoretical viewpoint by
Guillory and Bilmes~\cite{gb09}, and by Afshani et al.~\cite{acdfmsz07}. 
Our work is related to an elegant result from~\cite{gb09} which bounds the number 
of mistakes made by the mincut
classifier on the worst-case assignment of labels in terms of $\Phi/\Psi(L)$.
Here $\Phi$ is the cutsize induced by the unknown labeling, and $\Psi(L)$ is a
function of the query (or training) set $L$, which depends on the structural properties of the
(unlabeled) graph.
For instance, in the above example of the binary system,
the value of $\Psi(L)$ when the query set $L$ includes just the two centers is $1$.
This implies that for the binary system graph, Guillory and Bilmes' bound
on the mincut strategy is $\Phi$ mistakes in the worst case
(note that in the above example $\Phi \le 1$).
Since $\Psi(L)$ can be efficiently computed on any given graph and query set $L$,
the learner's task might be reduced to finding a query set $L$ that maximizes $\Psi(L)$
given a certain query budget (size of $L$). Unfortunately, no feasible general algorithm
for solving this maximization problem is known, and so one must resort to
heuristic methods ---see~\cite{gb09}.

In this work we investigate the active learning problem on graphs in the important
special case of trees. We exhibit a simple iterative algorithm which, combined
with a mincut classifier, is optimal (up to constant factors) on any given labeled tree.
This holds even if the algorithm is not 
given information on the actual cutsize $\Phi$. Our method is extremely efficient, 
requiring $\scO(n\ln Q)$ time for placing $Q$ queries in an $n$-node tree, and 
space linear in $n$. As a byproduct of our analysis, 
we show that $\Psi$ can be efficiently maximized over trees to within constant factors.
Hence the bound $\min_{L} \Phi/\Psi(L)$ can be achieved efficiently.



Another interesting question is what kind of trade-off between queries and
mistakes can be achieved if the learner is not constrained by a given
query budget. We show that a simple modification of our selection algorithm
is able to trade-off queries and mistakes in an optimal way up to
constant factors.

Finally, we prove a general lower bound for predicting the labels
of any given graph (not necessarily a tree) when the query set
is up to a constant fraction of the number of vertices. Our lower
bound establishes that the number of mistakes must then be at least a 
constant fraction of the cutsize weighted by the effective resistances.
This lower bound apparently yields a contradiction to the results of
Afshani et al.~\cite{acdfmsz07}, who constructs the query set adaptively.
This apparent contradiction is also obtained via a simple counterexample
that we detail in \secref{s:graph}.

\section{Preliminaries and basic notation}\label{s:prel}
A labeled tree $(T,\by)$ is a tree $T = (V,E)$ whose nodes $V = \{1,\dots,n\}$
are assigned binary labels $\by = (y_1,\dots,y_n)\in \{-1,+1 \}^n$.
We measure the label regularity of $(T,\by)$ by the {\em cutsize} $\Phi_T(\by)$
induced by $\by$ on $T$, i.e., $\Phi_T(\by) = \bigl|\{(i,j) \in E\,:\, y_i \neq y_j \}\bigr|$.
We consider the following \textsl{active} learning protocol: given a tree $T$ with
unknown labeling $\by$, the learner obtains all labels in a {\em query set}
$L \subseteq V$, and is then required to predict the labels of the remaining nodes
$V \setminus L$. Active learning algorithms work in two-phases:
a {\em selection} phase, where a query set of given size is constructed,
and a {\em prediction} phase, where the algorithm receives the labels
of the query set and predicts the labels of the remaining nodes.
Note that the only labels ever observed by the algorithm are those in the
query set. In particular, no labels are revealed during the prediction phase.

We measure the ability of the algorithm by the number of prediction mistakes 
made on $V \setminus L$, where it is reasonable to expect this number 
to depend on both the uknown cutsize $\Phi_T(\by)$ and the number $|L|$ of
requested labels. A slightly different prediction measure is considered
in \secref{s:q_m}.

Given a tree $T$ and a query set $L \subseteq V$,
a node $i \in V \setminus L$ is a {\bf fork node generated by} $L$
if and only if there exist three distinct nodes $i_1,i_2,i_3 \in L$ that
are connected to $i$ through edge disjoint paths. Let $\fork(L)$ be
the set of all fork nodes generated by $L$. Then $L^+$ is the query
set obtained by adding to $L$ all the generated fork nodes, i.e.,
$
L^+ \triangleq L \cup \fork(L)
$.
We say that $L \subseteq V$ is \textbf{0-forked} iff $L^+ \equiv L$.   
Note that $L^+$ is 0-forked. That is, $\fork(L^+) \equiv \emptyset$
for all $L \subseteq V$.


Given a node subset $S \subseteq V$, we use $T \setminus S$ to denote
the forest obtained by removing from the tree $T$ all nodes in $S$
and all edges incident to them. Moreover, given a second tree $T'$,
we denote by $T \setminus T'$ the forest $T \setminus V'$, where $V'$
is the set of nodes of $T'$. 
Given a query set $L \subseteq V$, a \textbf{hinge-tree} is any
connected component of $T \setminus L^+$.
We call \textbf{connection node} of a hinge-tree a node
of $L$ adjacent to any node of the hinge tree. We distinguish between 
1-hinge and 2-hinge trees. A \textbf{1-hinge-tree} has one connection
node only, whereas a \textbf{2-hinge-tree} has two (note that a hinge
tree cannot have more than two connection nodes because $L^+$ is zero-forked,
see \figref{f:hinge}).

%
\begin{figure}[ht]
\begin{center}
\scalebox{0.3}{\includegraphics{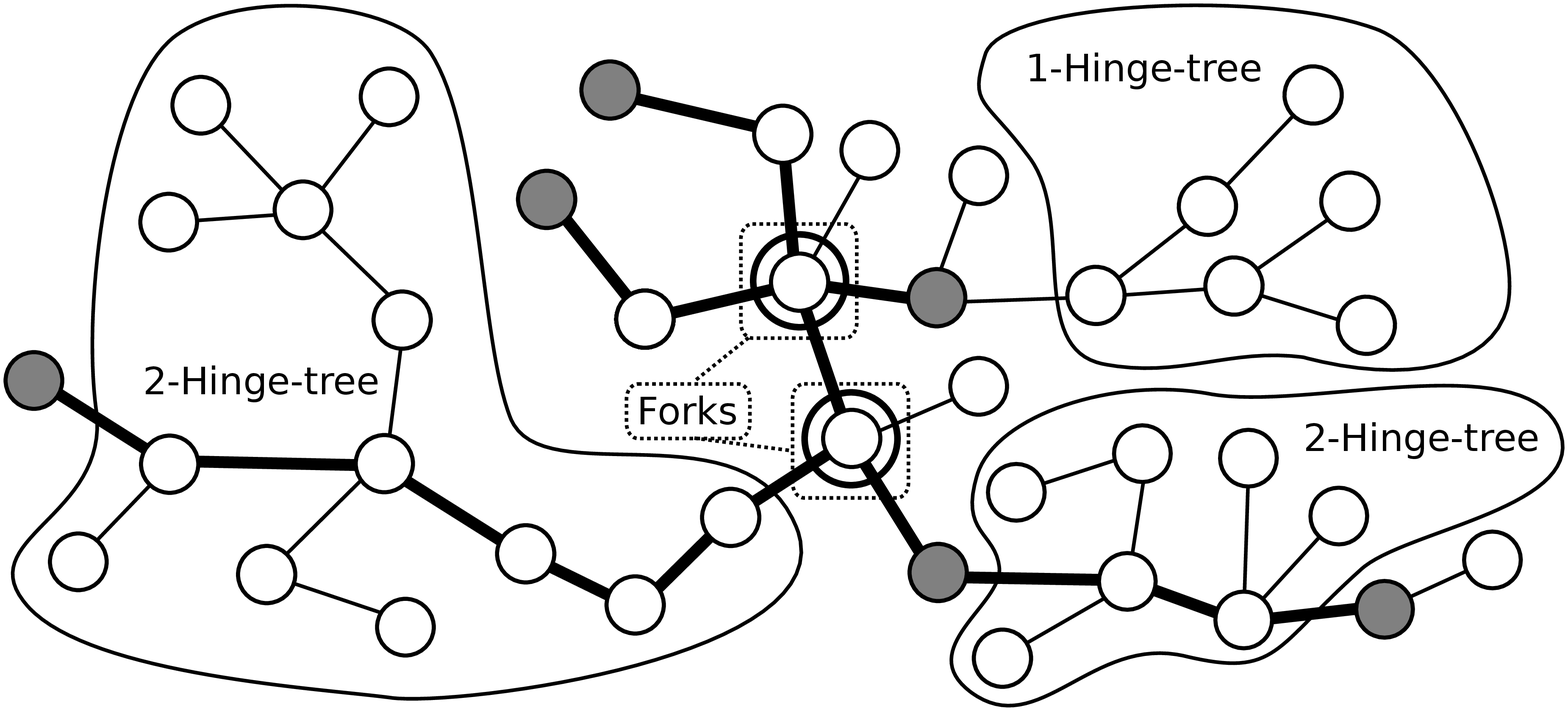}}
\end{center}
\caption{A tree $T = (V,E)$ whose nodes are shaded (the query set $L$)
or white (the set $V\setminus L$ ). The shaded nodes are also the
connection nodes of the depicted hinge trees (not all hinge trees are contoured). 
The fork nodes generated
by $L$ are denoted by double circles. The thick black edges connect the
nodes in $L$. 
\label{f:hinge}
}
\end{figure}

\section{The active learning algorithm}
\label{s:algo}
We now describe the two phases of our active learning algorithm. For the sake
of exposition, we call \sel\ the selection phase and \pred\ the prediction phase.
\sel\ returns a 0-forked query set $L^+_{\sel} \subseteq V$ of desired size.
$\pred$ takes in input the query set $L^+_{\sel}$ and the set of labels $y_i$
for all $i\in L^+_{\sel}$. Then \pred\ returns a prediction for the
labels of all remaining nodes $V\setminus L^+_{\sel}$.

\begin{figure}[ht]
\begin{center}
\scalebox{0.28}{\includegraphics{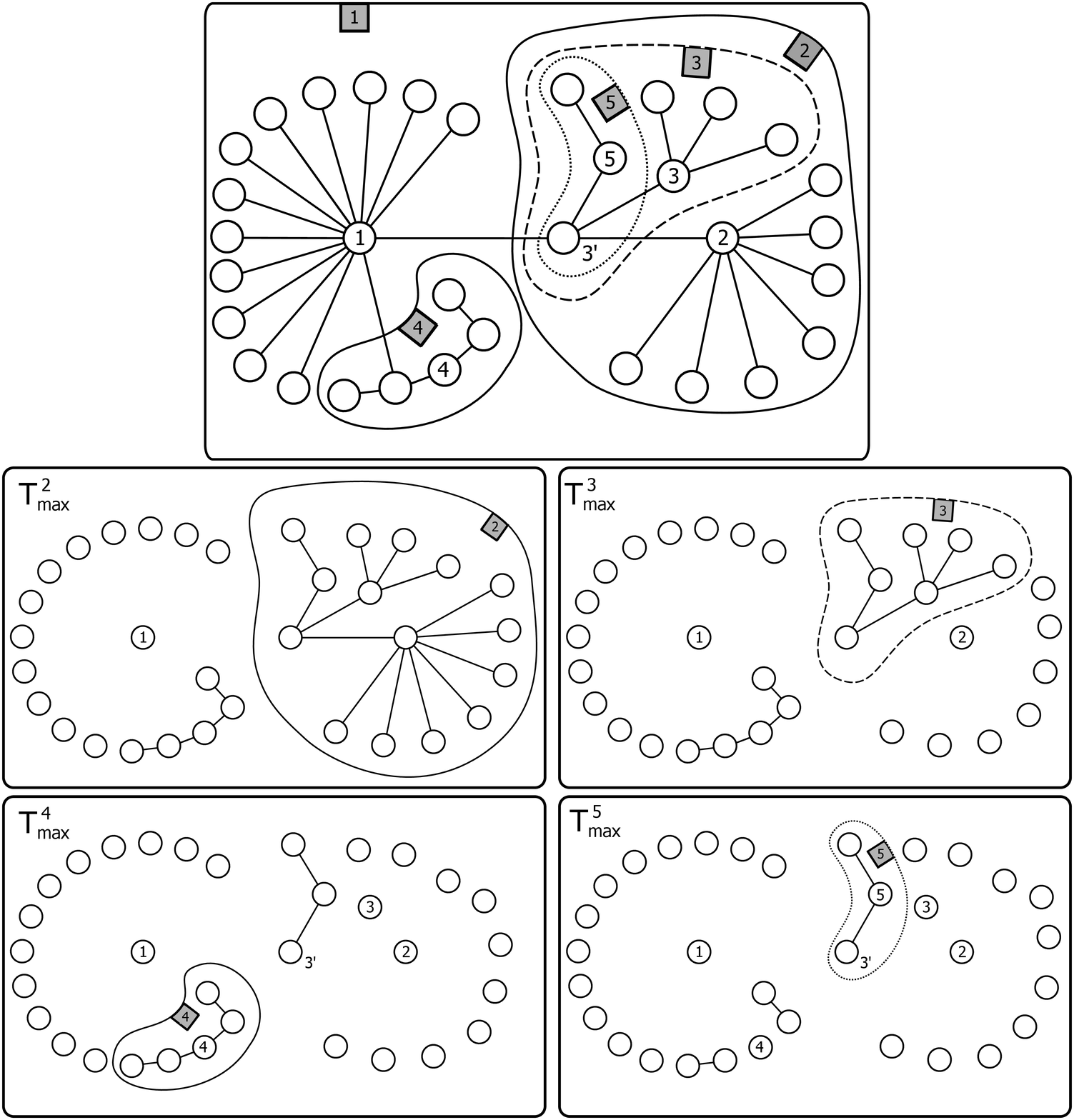}}
\end{center}
\caption{
\label{f:sel_pred}
The \sel\ algorithm at work. The upper pane shows the initial tree $T = T^1_{\max}$
(in the box tagged with ``1''), and the subsequent subtrees $T^2_{\max}$, $T^3_{\max}$, $T^4_{\max}$,
and $T^5_{\max}$. The left pane also shows the nodes selected by $\sel$ in chronological order. 
The four lower panes show the connected components of $T\setminus L_{t}$ resulting from this selection.
Observe that at the end of round~3, $\sel$ detects the generation of fork node $3'$.
This node gets stored, and is added to $L_{\sel}$ at the end of the selection process.
}
\end{figure}

In order to see the way \sel\ operates, we formally introduce
the function $\Psiinv$. This is the reciprocal of the $\Psi$
function introduced in~\cite{gb09} and mentioned in \secref{s:intro}.
\begin{definition}
Given a tree $T = (V,E)$ and a set of nodes $L \subseteq V$,
\[
\Psiinv(L) \triangleq \max_{\emptyset \not\equiv V' \subseteq V\setminus L}\ \ 
\frac{|V'|}{\bigl|\{(i,j)\in E :i \in V', j \in V\setminus V'\}\bigr|}~.
\]
\end{definition}
In words, $\Psiinv(L)$ measures the largest set of nodes not in $L$ that share
the least number of edges with nodes in $L$. From the adversary's viewpoint,
$\Psiinv(L)$ can be described as the largest return in mistakes per unit of
cutsize invested. We now move on to the description of the algorithms \sel\ and \pred.

\smallskip
The \textbf{selection algoritm} $\sel$ greedily computes a query set that
minimizes $\Psiinv$ to within constant factors.
To this end, \sel\ exploits \lemref{l:psi_upsilon} (a) (see \secref{ss:upper})
stating that, for any fixed query set $L$, the subset $V'\subseteq V$
maximizing 
$\frac{|V'|}{\bigl|\{(i,j)\in E :i \in V', j \in V\setminus V'\}\bigr|}$ 
is always included in a connected component of $T \setminus L$.
%
%
Thus \sel\ places
its queries in order to end up with a query set $L^+_{\sel}$ such that
the largest component of $T \setminus L^+_{\sel}$ is as small as possible.

$\sel$ operates as follows. 
Let $L_t \subseteq L$ be the set including the first $t$ nodes chosen by $\sel$,
$T^t_{\max}$ be the largest connected component of $T \setminus L_{t-1}$, 
and $\sigma(T',i)$ be the size (number of nodes) of the largest component of the 
forest $T' \setminus \{i\}$, where $T'$ is any tree. 
At each step $t = 1,2,\dots$, \sel\ simply picks the node $i_t \in T^{t}_{\max}$
that minimizes $\sigma(T^{t}_{\max},i)$ over $i$
and sets $L_t = L_{t-1} \cup \{i_t\}$. During this iterative construction,
$\sel$ also maintains a set containing all fork nodes
generated in each step by adding nodes $i_t$ to the sets $L_{t-1}$.\footnote{
In Section~\ref{s:impl} we will see that during each step $L_{t-1} \rightarrow L_t$ 
at most a single new fork node may be generated.
}
After the desired number of queries is reached (also counting the queries
that would be caused by the stored fork nodes), \sel\ has terminated the construction
of the query set $L_{\sel}$. The final query set $L^+_{\sel}$, obtained
by adding all stored fork nodes to $L_{\sel}$, is then returned.

\smallskip
The \textbf{Prediction Algorithm \pred}
receives in input the labeled nodes of the 0-forked query set $L^+_{\sel}$
and computes a mincut assignment.
Since each component of $T \setminus L^+_{\sel}$ is either a 1-hinge-tree or a
2-hinge-tree, \pred\ is simple to describe and is also very efficient.
The algorithm predicts all the nodes of hinge-tree $\scT$ using the same label
$\yhat_{\scT}$. This label is chosen according to the following two cases:
\begin{enumerate}
\item If $\scT$ is a 1-hinge-tree, then $\yhat_{\scT}$ is set to the label of its 
unique connection node;
\item If $\scT$ is a 2-hinge-tree and the labels of its two connection nodes are 
equal, then $\yhat_{\scT}$ is set to the label of its connection nodes, otherwise
$\yhat_{\scT}$ is set as the label of the closer connection node (ties are
broken arbitrarily).
\end{enumerate}
In Section~\ref{s:impl} we show  that \sel\ requires overall $\scO(|V|\log Q)$
time and $\scO(|V|)$ memory space for selecting $Q$ query nodes.
Also, we will see that the total running time taken by \pred\ for predicting
all nodes in $V \setminus L$ is linear in $|V|$.

\newcommand{\argmin}[1]{\underset{#1}{\mathrm{argmin}}}
\newcommand{\argmax}[1]{\underset{#1}{\mathrm{argmax}}}
\renewcommand{\inbd}[1]{\partial{#1}}

\section{Analysis}
\label{s:algo_an}
For a given tree $T$, we denote by $m_A(L,\by)$ the number of prediction
mistakes that algorithm $A$ makes on the labeled tree $(T,\by)$ when
given the query set $L$.
Introduce the function
\[
    m_A(L,K) = \max_{\by \,:\, \Phi_T(\by) \le K} m_A(L,\by)
\]
denoting the number of prediction mistakes
made by $A$ with query set $L$
on all labeled trees with cutsize bounded by $K$. 
We will also find it useful to deal with the ``lower bound''
function $\lb(L,K)$. This is the maximum expected number of 
mistakes that any prediction algorithm $A$ can be forced to make 
on the labeled tree $(T,\by)$ when the query set is $L$ and the 
cutsize is not larger than $K$. 

We show that the number of mistakes made by \pred\ on any labeled tree
when using the query set $L^+_{\sel}$ satisfies
\[
    m_{\pred}(L^+_{\sel},K) \le 10\,\lb(L,K)
\]
for all query sets $L\subseteq V$ of size up to $\tfrac{1}{8}|L^+_{\sel}|$.
Though neither \sel\ nor \pred\ do know the actual cutsize of the
labeled tree $(T,\by)$, the combined use of these procedures is
competitive against any algorithm that knows the cutsize budget $K$ beforehand.


While this result implies the optimality (up to constant factors) of our
algorithm, it does not relate the mistake bound to the cutsize, which
is a clearly interpretable measure of the label regularity.
In order to address this issue, we show that our algorithm also satisfies
the bound
\[
    m_{\pred}(L^+_{\sel},\by) \le 4\,\Psiinv(L)\,\Phi_T(\by)
\]
for all query sets $L\subseteq V$ of size up to $\tfrac{1}{8}|L^+_{\sel}|$.
The proof of these results needs a number of preliminary lemmas.
\begin{lemma}
\label{l:sigma}
For any tree $T = (V,E)$ it holds that
${\displaystyle \min_{v \in V}\sigma(T,v) \le \tfrac{1}{2}|V|}$.
\end{lemma}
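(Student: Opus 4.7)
The plan is to prove the existence of a centroid-type vertex by a minimality/contradiction argument. Let $n = |V|$, and let $v^* \in V$ be any vertex that minimizes $\sigma(T,\cdot)$. I will show that $\sigma(T,v^*) \le n/2$; if not, I produce a neighbor of $v^*$ with strictly smaller $\sigma$-value, contradicting the choice of $v^*$.

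First I would set up notation for the components of $T \setminus \{v^*\}$. Suppose for contradiction that $\sigma(T,v^*) > n/2$. Then among the components of $T \setminus \{v^*\}$ there is a unique ``big'' component $C$ with $|C| > n/2$; call the remaining components $C_1,\dots,C_k$, so that $|C| + \sum_i |C_i| = n-1$ and in particular $\sum_i |C_i| < n/2$. Let $u$ denote the unique neighbor of $v^*$ in $C$ (it exists and is unique because $T$ is a tree and $C$ is the component of $v^*$'s side containing one specific neighbor).

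Next I would analyze the components of $T \setminus \{u\}$. Removing $u$ from $T$ leaves two types of components: (i) one component containing $v^*$, which consists of $\{v^*\} \cup C_1 \cup \dots \cup C_k$ (since deleting $u$ only severs the edge $\{v^*,u\}$ on the $v^*$-side, and $v^*$ remains connected to all $C_i$); this component has size $n - |C| < n/2$. (ii) Each remaining component is a connected component of the forest $C \setminus \{u\}$, and hence has size at most $|C| - 1 < |C| = \sigma(T,v^*)$. Putting (i) and (ii) together, $\sigma(T,u) < \sigma(T,v^*)$, contradicting the minimality of $v^*$. Hence $\sigma(T,v^*) \le n/2$, which proves the lemma.

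I don't foresee a real obstacle here: the only thing to be careful about is the structure of the component of $T \setminus \{u\}$ that contains $v^*$ — one has to verify it equals $\{v^*\} \cup \bigcup_i C_i$ (so that its size is exactly $n - |C|$), which uses that $T$ is a tree and $u$ is the unique $v^*$-to-$C$ neighbor. Everything else is bookkeeping on component sizes.
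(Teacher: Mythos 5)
Your proof is correct and takes essentially the same approach as the paper's: both pick a minimizer $v^*$, assume for contradiction that some component $C$ of $T\setminus\{v^*\}$ has more than $|V|/2$ nodes, and show that the neighbor $u$ of $v^*$ inside $C$ satisfies $\sigma(T,u)<\sigma(T,v^*)$. The paper phrases the last step as a two-case analysis on where the largest component of $T\setminus\{u\}$ lives (inside $C$ or disjoint from it), which is exactly the bookkeeping in your items (i) and (ii).
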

\begin{proof}
Let $i \in \mathrm{argmin}_{v \in V}\sigma(T,v)$. For the sake of contradiction, 
assume there exists a component $T_i = (V_i,E_i)$ of $T \setminus \{i\}$ such 
that $|V_i| > |V|/2$. Let $s$ be the sum of the sizes all other components.
Since $|V_i|+s = |V|-1$, we know that $s \le |V|/2-1$.
Now let $j$ be the node adjacent to $i$ which belongs to $V_i$ and
$T_j = (V_j,E_j)$ be the largest component of $T \setminus \{j\}$.
There are only two cases to consider: either $V_j \subset V_i$ or $V_j \cap V_i \equiv \emptyset$.
In the first case, $|V_j| < |V_i|$.
In the second case, $V_j \subseteq \{i\} \cup \bigl(T \setminus V_i\bigr)$,
which implies $|V_j| \le 1 + s \le |V|/2 < |V_i|$. In both cases,
$i \not\in\mathrm{argmin}_{v \in V}\sigma(T,v)$, which provides the desired contradiction.
\end{proof}
\begin{lemma}\label{l:deforked}
For all subsets $L \subset V$ of the nodes of a tree $T = (V,E)$ we have
$
\big|L^{+}\big| \le 2|L|
$.
\end{lemma}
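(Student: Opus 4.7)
The plan is to reduce the question to a counting argument on the Steiner subtree $T_L \subseteq T$, defined as the minimal subtree of $T$ containing all nodes of $L$. The first step is to show that every fork node must lie in $T_L$. If $i \notin V(T_L)$, then $i$ belongs to some component of $T \setminus V(T_L)$ attached to $T_L$ through a unique node $v$, so every path from $i$ to any node of $L$ must traverse the same initial edge at $i$. Hence no three paths from $i$ to distinct nodes of $L$ can be pairwise edge-disjoint, and $i \notin \fork(L)$.

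The second step is to characterize fork nodes within $T_L$: I would argue that $i \in V(T_L) \setminus L$ belongs to $\fork(L)$ if and only if $\deg_{T_L}(i) \geq 3$. For the ``only if'' direction, three edge-disjoint paths starting at $i$ must leave through three distinct edges, which in a tree necessarily lie in $T_L$. For the ``if'' direction, if $i$ has three neighbors $u_1,u_2,u_3$ in $T_L$, then following the unique path in $T_L$ out of each $u_j$ one eventually reaches a leaf of $T_L$, and by minimality every leaf of $T_L$ belongs to $L$. This produces three edge-disjoint paths from $i$ to three distinct nodes of $L$.

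The third step is the degree-counting itself. Let $\ell$ denote the number of leaves of $T_L$. Since all leaves of $T_L$ lie in $L$, one has $\ell \le |L|$. Applying the handshaking identity to $T_L$ and separating its nodes into leaves, degree-$2$ nodes, and nodes of degree at least $3$, the inequality $\sum_v \deg(v) = 2(|V(T_L)| - 1)$ together with the bound $\deg(v) \ge 3$ for the branching nodes gives, after a short manipulation, that the number of branching nodes is at most $\ell - 2$. Combining with the characterization from the second step, $|\fork(L)| \le \ell - 2 \le |L|$, and therefore $|L^+| = |L| + |\fork(L)| \le 2|L|$.

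The main obstacle is really the characterization in step two, in particular the ``only if'' direction, which relies on the fact that two paths in a tree emanating from the same node and sharing their first edge must coincide until they first diverge; after that, the bookkeeping in step three is a textbook consequence of the handshaking lemma applied to $T_L$.
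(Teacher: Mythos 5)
Your proof is correct, and it takes a genuinely different route from the paper's, although both ultimately rest on the same combinatorial fact: in a tree, the number of vertices of degree at least three is smaller than the number of leaves. You work globally on the Steiner subtree $T_L$ spanned by $L$, show that $\fork(L)$ is precisely the set of non-$L$ vertices of $T_L$ of degree at least three, note that every leaf of $T_L$ lies in $L$ by minimality, and apply the handshaking identity once to get $|\fork(L)| \le |L|-2$ (in fact a marginally sharper bound than needed). The paper never introduces $T_L$: it decomposes $T \setminus L$ into its connected components, extends each component $\scT_i$ by the adjacent $L$-nodes into $\scT'_i$, bounds the fork nodes inside each $\scT'_i$ by its leaves, and uses a depth-first ordering of the components to make the leaf-to-fork assignment injective across components (for $i \ge 2$, each $\scT'_i$ shares exactly one leaf with the previously visited trees). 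Your global argument buys you freedom from this ordering and double-counting bookkeeping; the paper's component-wise decomposition matches the structures (hinge-trees, components of $T \setminus L^+$) it reuses throughout the rest of the analysis. One small point to tighten: in the ``if'' direction of your step two, from a neighbor $u_j$ of $i$ in $T_L$ there may be several ways to continue the walk, so the clean statement is that the component of $T_L \setminus \{i\}$ containing $u_j$ must contain a leaf of $T_L$ (e.g., its vertex farthest from $i$), hence a node of $L$; the three resulting paths leave $i$ through distinct edges and are therefore edge-disjoint.
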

\begin{proof}
Pick an arbitrary node of $T$ and perform a depth-first visit
of all nodes in $T$. This visit induces an ordering $\scT_1,\scT_2,\dots$ of
the connected components in $T\setminus L$ based on the order of the nodes visited first
in each component.
Now let $\scT'_1,\scT'_2,\dots$ be such that each $\scT'_i$ is a component of
$\scT_i$ extended to include all nodes of $L$ adjacent to nodes in $\scT_i$.
Then the ordering implies that, for $i \ge 2$, $\scT'_i$ shares exactly one node
(which must be a leaf) with all previously visited trees.
Since in any tree the number of nodes
of degree larger than two must be strictly smaller than the number of leaves, we have
$|\fork(\scT'_i)| < |\Lambda_i|$ where, 
with slight abuse of notation,  we denote by $\fork(\scT'_i)$ the set of all fork nodes 
in subtree $\scT'_i$. Also, we let $\Lambda_i$ be the set of leaves of $\scT'_i$.
This implies that, for $i=1,2,\dots$, each fork node in $\fork(\scT'_i)$ can be injectively
associated with one of the $|\Lambda_i|-1$ leaves of $\scT'_i$ that are not shared
with any of the previously visited trees. 
Since $|\fork(L)|$ is equal to the sum of $|\fork(\scT_i)|$ over all indices $i$,
this implies that $|\fork(L)| \le |L|$.
\end{proof}

\newcommand{\Tnol}{T_{\mathrm{nol}}}
\newcommand{\Lambdaa}{\Lambda_{\mathrm{add}}}
\begin{lemma}\label{l:split}
Let $L_{t-1} \subseteq L_{\sel}$ be the set of the first $t-1$ nodes chosen by \sel.
Given any tree $T = (V,E)$, the largest subtree of $T \setminus L_{t-1}$
contains no more than $\tfrac{2}{t}|V|$ nodes.
\end{lemma}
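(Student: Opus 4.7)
The plan is to strong-induct on $\tau$ to prove the reformulated claim $M_\tau \le 2|V|/(\tau+1)$ for every $\tau \ge 0$, where $M_\tau$ denotes the size of the largest component of $T\setminus L_\tau$ (so the lemma is the special case $\tau = t-1$). The base case $\tau = 0$ is immediate since $M_0 = |V|$. For the inductive step I fix $\tau$, set $s := M_\tau$, and note that Lemma~\ref{l:sigma} applied to the chosen subtree at each step guarantees that the pieces produced have size $\le M_{\tau'}/2$; in particular $M_{\tau'}$ is non-increasing in $\tau'$, so $M_{\tau'} \ge s$ for every $\tau'\le\tau$.

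The key idea is a two-phase decomposition of the history relative to the threshold $2s$: call a step \emph{Phase~1} if the current maximum satisfies $M_{\tau'} \ge 2s$ just before it, and \emph{Phase~2} if $s \le M_{\tau'} < 2s$; monotonicity of $M_{\tau'}$ makes Phase~1 a prefix of some length $\tau^*$ and Phase~2 the complementary suffix. I then track the large-mass potential $W_{\tau'} := \sum_{|C|\ge s}|C|$ summed over the components of $T\setminus L_{\tau'}$, which satisfies $W_0 = |V|$ and $W_\tau \ge s$. By Lemma~\ref{l:sigma}, every Phase~2 piece has size $\le M_{\tau'}/2 < s$, so the whole split component exits the sum and $W$ drops by at least $s$; each Phase~1 step drops $W$ by at least $1$ (the removed vertex). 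Summing the per-step drops over the $\tau$ selections yields
\[
|V| - s \;\ge\; \tau^* + (\tau - \tau^*)\,s.
\]

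The remaining and most delicate step is to bound $\tau^*$, and this is the part I expect to be the main obstacle: the naive one-step recursion $M_{\tau+1} \le \max(M_\tau/2,\,M^{(2)}_\tau)$ fails whenever the second-largest component rivals the largest, so the induction cannot close on $M_{\tau+1}$ alone. The workaround is to invoke the inductive hypothesis at index $\tau^*-1 < \tau$: it gives $M_{\tau^*-1} \le 2|V|/\tau^*$, and combined with the definitional $M_{\tau^*-1} \ge 2s$ this forces $\tau^* \le |V|/s$ (the bound is vacuous when $\tau^* = 0$, which corresponds to all steps being Phase~2). A short algebraic rearrangement of the displayed inequality then yields
\[
s(\tau+1) \;=\; s(\tau - \tau^* + 1) + s\tau^* \;\le\; (|V| - \tau^*) + s\tau^* \;=\; |V| + \tau^*(s-1) \;\le\; |V| + \tfrac{|V|}{s}(s-1) \;\le\; 2|V|,
\]
which is exactly the desired bound $M_\tau \le 2|V|/(\tau+1)$.
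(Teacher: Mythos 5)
Your argument is correct, but it reaches the bound by a genuinely different route than the paper. The paper's proof is a single global counting argument: it records the history of splits in an auxiliary ``splitting tree'', pads that tree so every internal node has at least two children, uses Lemma~\ref{l:sigma} to show that each leaf of the padded tree corresponds to a disjoint piece of $T$ of size at least $|T^t_{\max}|-1$, and then concludes from the fact that there are at least $t/2$ such leaves whose sizes sum to at most $|V|$. You instead run a strong induction on the number of selections combined with an amortized potential argument: $W_{\tau'}$ (the mass in components of size at least $s$) starts at $|V|$, ends at $\ge s$, drops by at least $s$ on every step where the current maximum is below $2s$ (since Lemma~\ref{l:sigma} then pushes every piece of the split component below the threshold $s$), and drops by at least $1$ on every earlier step; the inductive hypothesis applied at index $\tau^*-1$ caps the length of the early phase by $|V|/s$, and the final rearrangement closes. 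I checked the prefix structure of Phase~1 (which does follow from monotonicity of $M_{\tau'}$), the telescoping inequality, the vacuous case $\tau^*=0$, and the algebra; all are sound, and both proofs yield the same constant $2$. Your approach buys a more self-contained argument with no auxiliary tree construction, at the price of a recursion and two-phase bookkeeping; your opening observation that the naive one-step recursion on $M_\tau$ alone cannot close (because the second-largest component may rival the largest) is exactly the obstacle that the paper's padded splitting tree is built to circumvent.
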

\begin{proof}
Recall that $i_s$ denotes the $s$-th node selected by \sel\ during the incremental construction of
the query set $L_{\sel}$, and that
$T^s_{\max}$ is the largest component of $T \setminus L_{s-1}$.
The first $t$ steps of the recursive splitting procedure performed
by \sel\ can be associated with a splitting tree $T'$ defined
in the following way.
The internal nodes of $T'$ are $T^s_{\max}$, for $s \ge 1$.
The children of $T^s_{\max}$ are the connected components
of $T^s_{\max} \setminus \{i_s\}$, i.e., the subtrees of
$T^s_{\max}$ created by the selection of $i_s$. 
Hence, each leaf of $T'$ is bijectively associated with a tree
in $T \setminus L_t$.

Let $\Tnol'$ be the tree obtained from $T'$ by deleting all leaves.
Each node of $\Tnol'$ is one of the $t$ subtrees split by \sel\ during
the construction of $L_t$. As $T^t_{\max}$ is split by $i_t$, it is
a leaf in $\Tnol'$.
We now add a second child to each internal node $s$ of $\Tnol'$ having
a single child. This second child of $s$ is obtained by merging all the
subtrees belonging to leaves of $T'$ that are also children of $s$.
Let $T''$ be the resulting tree.


We now compare the cardinality of $T^t_{\max}$ to that of the subtrees
associated with the leaves of $T''$. 
Let $\Lambda$ be the set of all leaves of $T''$ and
$\Lambdaa = T'' \setminus \Tnol' \subset \Lambda$ be
the set of all leaves added to $\Tnol'$ to obtain $T''$. 
First of all, note that $|T^t_{\max}|$ is not larger than the number
of nodes in any leaf of $\Tnol'$.
This is because the selection rule of \sel\ ensures that $T^t_{\max}$
cannot be larger than any subtree associated with a leaf in $\Tnol'$, since
it contains no node selected before time $t$.
In what follows, we write $|s|$ to denote the size of the forest or
subtree associated with a node $s$ of $T''$.
We now prove the following claim:

\smallskip\noindent
\textbf{Claim.} For all $\ell\in\Lambda$, $|T^t_{\max}| \le |\ell|$,
and for all $\ell\in\Lambdaa$, $|T^t_{\max}|-1 \le |\ell|$.

\smallskip\noindent
\textsl{Proof of Claim.}
The first part just follows from the observation that any $\ell\in\Lambda$
was split by \sel\ before time $t$. In order to prove the second part,
pick a leaf $\ell \in \Lambdaa$. Let $\ell'$ be its unique sibling in 
$T''$ and let $p$ be the parent of $\ell$ and $\ell'$, also in $T''$.
\lemref{l:sigma} applied to the subtree $p$ implies $|\ell'| \le \tfrac{1}{2}|p|$.
Moreover, since $|\ell|+|\ell'| = |p|-1$, we obtain $|\ell| + 1 \ge \tfrac{1}{2}|p| \ge
|\ell'| \ge |T^t_{\max}|$, the last inequality using the first part of the claim.
This implies $|T^t_{\max}| - 1 \le |\ell|$, and the claim is proven.

\smallskip\noindent
Let now $N(\Lambda)$ be the number of nodes in subtrees and forests
associated with the leaves of $T''$. With each internal node of $T''$
we can associate a node of $L_{\sel}$ which does not belong
to any leaf in $\Lambda$. Moreover, the number $|T''\setminus\Lambda|$
of internal nodes in $T''$ is bigger than the number $|\Lambdaa|$ of
internal nodes of $\Tnol'$ to which a child has been added.
Since these subtrees and forests are all distinct, we obtain
$N(\Lambda) + |T''\setminus\Lambda| < N(\Lambda) + |\Lambdaa| \le |V|$.
Hence, using the above claim we can write
$N(\Lambda) \ge \bigl(|\Lambda| - |\Lambdaa|\bigr)|T^t_{\max}|
+ |\Lambdaa|\bigl(|T^t_{\max}|-1\bigr)$, which implies 
$|T^t_{\max}| \le \bigl(N(\Lambda)+|\Lambdaa|\bigr)/|\Lambda|
\le |V|/|\Lambda|$.  Since each internal node of $T''$ has at least 
two children, we have that $|\Lambda| \ge |T''|/2 \ge |\Tnol'|/2 = t/2$.
Hence, we can conclude that $|T^t_{\max}| \le 2|V|/t$.
\end{proof}

\subsection{Lower bounds}
We now state and prove a lower bound on the number of mistakes that any prediction
algorithm (even knowing the cutsize budget $K$) makes on any given tree, when the query
set $L$ is 0-forked.
The bound depends on the following quantity:
Given a tree $T(V,E)$, a node subset $L \subseteq V$ and an integer $K$, the
\textbf{component function} $\Upsilon(L,K)$ is the sum of the sizes
of the $K$ largest components of $T \setminus L$, or~~ $|V\setminus L|$~~ if $T \setminus L$
has less than $K$ components.
\begin{theorem}\label{th:lb}
For all trees $T = (V,E)$, for all 0-forked subsets $L^+ \subseteq V$,
and for all cutsize budgets $K = 0,1,\dots,|V|-1$, we have that
$
    \lb(L^+,K) \ge \tfrac{1}{2}\Upsilon(L^+,K)
$.
\end{theorem}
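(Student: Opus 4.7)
The plan is to apply Yao's minimax principle: I will construct a probability distribution $D$ over labelings $\by$, each having cutsize at most $K$, such that every deterministic prediction algorithm incurs an expected number of mistakes of at least $\tfrac{1}{2}\Upsilon(L^+,K)$ under $D$. Let $\scT_1,\dots,\scT_K$ denote the $K$ largest components of $T \setminus L^+$ (or all of them if there are fewer than $K$), and write $K_1,K_2$ for the numbers of these that are respectively 1-hinge-trees and 2-hinge-trees, so that $K_1+K_2\le K$ and $\sum_j|\scT_j|=\Upsilon(L^+,K)$.

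The first and key step is a structural observation: the \emph{skeleton} $G_{L^+}$, defined on vertex set $L^+$ with an edge $\{c,c'\}$ whenever $c,c'$ are either adjacent in $T$ or are the two connection nodes of a common 2-hinge-tree, is itself a tree. The reason is that $G_{L^+}$ arises from $T$ by first deleting every 1-hinge-tree (a pendant subtree of $T$ attached at a single $L^+$ node) and then contracting every 2-hinge-tree to a single edge between its two connection nodes; the 0-forked assumption is exactly what guarantees that these contractions are unambiguous, and both operations preserve the tree property. Because any tree is bipartite with respect to an arbitrary $\pm 1$ sign assignment on its edges (root $G_{L^+}$ and propagate colors outward), I can pick a 2-coloring $\chi:L^+\to\{+1,-1\}$ with $\chi(c)\ne\chi(c')$ on exactly those edges of $G_{L^+}$ that correspond to the \emph{selected} 2-hinge-trees among $\scT_1,\dots,\scT_K$, and $\chi(c)=\chi(c')$ on every other edge of $G_{L^+}$ (direct $L^+\!$--$L^+$ edges of $T$ and non-selected 2-hinge-trees).

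I would now take $D$ to be the following distribution: set $\by|_{L^+}=\chi$ deterministically; for each non-selected component of $T\setminus L^+$, assign every node the common color of its connection node(s); and, for each selected component $\scT_j$, independently flip a fair coin and assign every node of $\scT_j$ the same label, chosen uniformly from $\{\chi(c),-\chi(c)\}$ if $\scT_j$ is a 1-hinge-tree with connection $c$, and from $\{\chi(c),\chi(c')\}$ if $\scT_j$ is a 2-hinge-tree with connections $c,c'$. Cutsize accounting is then routine: edges inside $L^+$, boundary edges to non-selected components, and edges internal to any component all contribute $0$; each selected 1-hinge-tree contributes $0$ or $1$ depending on the coin flip; and each selected 2-hinge-tree contributes exactly $1$, since $\chi(c)\ne\chi(c')$ forces precisely one of its two boundary edges to be cut regardless of which monochromatic extension is chosen. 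Hence the realized cutsize is always at most $K_1+K_2\le K$. For the expected mistakes: the algorithm's only observation, $\by|_{L^+}=\chi$, is deterministic under $D$, so its predictions on $V\setminus L^+$ are deterministic, while each $y_v$ for $v$ in a selected component is uniform over two distinct values independently of that observation. The algorithm therefore errs at each such $v$ with probability exactly $1/2$, giving $\bE_D[\mathrm{mistakes}]\ge\tfrac{1}{2}\Upsilon(L^+,K)$, and Yao's principle finishes the proof.

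The main obstacle is the structural first step. Once $G_{L^+}$ is recognized as a tree and the signed 2-coloring $\chi$ is in place, the crucial gain is that selected 2-hinge-trees become cheap to randomize (cutsize exactly $1$ per component, with $|\scT_j|/2$ expected mistakes); if $L^+$ were instead colored monochromatically, flipping a selected 2-hinge-tree would cost cutsize $2$ per component and the bound would degrade by a factor of two. The remaining bookkeeping is straightforward.
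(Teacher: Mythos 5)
Your proposal is correct and follows essentially the same route as the paper's proof: pick the $K$ largest hinge-trees, label the connection nodes so that exactly the selected 2-hinge-trees have disagreeing connection nodes (the paper achieves this via a depth-first visit of $T$, which is the informal version of your skeleton-tree 2-coloring), assign a uniform random label to each selected component and fill the rest in agreement, so that the cutsize stays at most $K$ while each selected component contributes half its size in expected mistakes. Your explicit verification that the contracted skeleton $G_{L^+}$ is a tree is a nice way of making the paper's one-line ``depth-first visit'' step rigorous, but it is not a different argument.
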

\begin{proof}
We describe an adversarial strategy causing any algorithm to make at
least $\Upsilon(L^+,K)/2$ mistakes even when the cutsize budget $K$
is known beforehand. Since $L^+$ is 0-forked, each component of $T \setminus
L^+$ 
is a hinge-tree. Let $F_{\max}$ be the set of the 
$K$
largest hinge-trees of $T \setminus L^+$, and $E(\scT)$ be the set of all
edges in $E$ incident to at least one node of a hinge-tree $\scT$.
The adversary creates at most one $\phi$-edge\footnote
{
A $\phi$-edge $(i,j)$ is one where $y_i \neq y_j$. 
} 
in each edge set $E(\scT_1)$
for all 1-hinge-trees $\scT_1 \in F_{\max}$, exactly one
$\phi$-edge in each edge set $E(\scT_2)$ for all 2-hinge-trees
$\scT_2 \in F_{\max}$, and no $\phi$-edges in the edge set
$E(\scT)$ of any remaining hinge-tree $\scT \not\in F_{\max}$.
This is done as follows.
By performing a depth-first visit of $T$, the adversary can always assign
disagreeing labels to the two connection nodes of each 2-hinge-tree
in $F_{\max}$, and agreeing labels to the two connection
nodes of each 2-hinge-tree not in $F_{\max}$. Then,
for each hinge-tree $\scT\in F_{\max}$, the adversary assigns
a unique random label to all nodes of $\scT$, forcing $|\scT|/2$ mistakes
in expectation. The labels of the remaining hinge-trees not in
$F_{\max}$ are chosen in agreement with their connection nodes.
\end{proof}
%
%
\begin{remark}
Note that \thmref{th:lb} holds for \textsl{all} query sets, not only those that are 0-forked,
since any adversarial strategy for a query set $L^+$ can force at least the same mistakes on the
subset $L \subseteq L^+$.
Note also that it is not difficult to modify the adversarial strategy described in the proof of
\thmref{th:lb} in order to deal with algorithms that are allowed to adaptively choose the query
nodes in $L$ depending on the labels of the previously selected nodes.
The adversary simply assigns the same label to each node in the query set and then forces, with the same
method described in the proof, $\tfrac{1}{2}\Upsilon\bigl(L^+,\tfrac{K}{2}\bigr)$ mistakes in expectation
on the $\tfrac{K}{2}$ largest hinge-trees. Thus there are at most two $\phi$-edges in each edge set
$E(\scT)$ for all hinge-trees $\scT$, yielding at most $K$ $\phi$-edges in total.
The resulting (slightly weaker) bound is $\lb(L^+,K) \ge \tfrac{1}{2}\Upsilon\bigl(L^+,\tfrac{K}{2}\bigr)$.
\thmref{t:together} and \corref{th:bto_f} can also be easily rewritten in order to extend the results in
this direction.
\end{remark}

\subsection{Upper bounds}\label{ss:upper}
We now bound the total number of mistakes that \pred\ makes on any labeled tree
when the queries are decided by \sel. We use Lemma~\ref{l:sigma}
and~\ref{l:deforked}, together with the two lemmas below, to prove that
$m_{\pred}(L_{\sel}^+,K) \le 10\,\lb(L,K)$ for all cutsize budgets $K$
and for all node subset $L \subseteq V$ such that $|L| \le \tfrac{1}{8}|L_{\sel}^+|$.  
%
\begin{lemma}\label{l:ub_bto}
For all labeled trees $(T,\by)$ and for all 0-forked query sets $L^+ \subseteq V$,
the number of mistakes made by \pred\ satisfies 
$
    m_{\pred}(L^+,\by) \le \Upsilon\bigl(L^+,\Phi_T(\by)\bigr)
$.
\end{lemma}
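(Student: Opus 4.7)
The plan is to charge each hinge-tree on which $\pred$ errs to a distinct $\phi$-edge of $\by$. Since there are only $\Phi_T(\by)$ such edges, at most $\Phi_T(\by)$ hinge-trees can contain any mistake, and the total number of mistakes is bounded by the total size of these hinge-trees, which in turn is bounded by the sum of the $\Phi_T(\by)$ largest hinge-trees, i.e., by $\Upsilon(L^+,\Phi_T(\by))$.

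First I would make the charging precise. Call a hinge-tree $\scT$ of $T \setminus L^+$ \emph{bad} if $\pred$ makes at least one mistake inside $\scT$. I claim that every bad hinge-tree has a $\phi$-edge in its edge set $E(\scT)$ (the set of all edges of $T$ incident to at least one node of $\scT$). If $\scT$ is a 1-hinge-tree with connection node $c$, then $\pred$ predicts every node of $\scT$ as $y_c$; a mistake on $i\in\scT$ means $y_i\neq y_c$, and since the path from $i$ to $c$ lies entirely in $\scT\cup\{c\}$, its edges are in $E(\scT)$ and one of them must be a $\phi$-edge. If $\scT$ is a 2-hinge-tree with connection nodes $c_1,c_2$ and $y_{c_1}=y_{c_2}$, the same argument applies since $\pred$ predicts with the common label. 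If $y_{c_1}\neq y_{c_2}$, then the path from $c_1$ to $c_2$ (lying entirely in $\scT\cup\{c_1,c_2\}$) already contains a $\phi$-edge in $E(\scT)$.

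Next I would verify that the edge sets $\{E(\scT)\}_\scT$ are pairwise disjoint as $\scT$ ranges over hinge-trees of $T\setminus L^+$. Any edge $(u,v)\in E$ with both endpoints in $L^+$ belongs to no $E(\scT)$; any edge with both endpoints outside $L^+$ has its endpoints in the same hinge-tree (hinge-trees are connected components of $T\setminus L^+$); and any boundary edge between $L^+$ and a node $u$ of some $\scT$ lies only in $E(\scT)$. Hence the $E(\scT)$'s partition the set of edges incident to $V\setminus L^+$, and no $\phi$-edge is charged to two different hinge-trees. Consequently, the number of bad hinge-trees is at most $\Phi_T(\by)$.

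Finally, since $\pred$ can make at most $|\scT|$ mistakes on any hinge-tree $\scT$, the total number of mistakes is at most the sum of $|\scT|$ over bad hinge-trees. Because there are at most $\Phi_T(\by)$ bad hinge-trees, this sum is at most the sum of the sizes of the $\Phi_T(\by)$ largest hinge-trees of $T\setminus L^+$, which by definition equals $\Upsilon(L^+,\Phi_T(\by))$. The one step that requires some care is the disjointness of the edge sets $E(\scT)$, because one might initially worry that boundary edges incident to a fork or connection node could be shared; the argument above handles this by noting that only one endpoint of a boundary edge lies in a hinge-tree, and that $L^+$ being 0-forked ensures the hinge-tree structure is well-defined.
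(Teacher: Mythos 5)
Your proposal is correct and follows essentially the same route as the paper's proof: charge each hinge-tree containing a mistake to a $\phi$-edge in its (pairwise disjoint) incident edge set $E(\scT)$, conclude that at most $\Phi_T(\by)$ hinge-trees contain mistakes, and bound the total by the sum of the $\Phi_T(\by)$ largest components, i.e., $\Upsilon(L^+,\Phi_T(\by))$. Your write-up just spells out the contrapositive (bad trees contain a $\phi$-edge, rather than $\phi$-free trees being error-free) and the disjointness check in slightly more detail.
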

\begin{proof}
As in the proof of Theorem \ref{th:lb}, we first observe that
each component of $T \setminus L^+$ is a hinge-tree.
Let $E(\scT)$ be the set of all edges in
$E$ incident to nodes of a hinge-tree $\scT$, and 
$F_{\phi}$ be the set of hinge-trees such that, for all
$\scT \in F_{\phi}$, at least one edge of $E(\scT)$ is a $\phi$-edge.
Since $E(\scT) \cap E(\scT') \equiv \emptyset$ for all
$\scT,\scT' \in T \setminus L^+$, we have that $|F_{\phi}| \le \Phi_T(\by)$.
Moreover, since for any $\scT\not\in F_{\phi}$ there are no $\phi$-edges in $E(\scT)$,
the nodes of $\scT$ must be labeled as its connections nodes.
This, together with the prediction rule of \pred, implies that \pred\ makes
no mistakes over any of the hinge-trees $\scT\not\in F_{\phi}$.
Hence, the number of mistakes made by \pred\ is bounded by the sum of the sizes
of all hinge-trees $\scT \in F_{\phi}$, which (by definition of $\Upsilon$) is
bounded by $\Upsilon\bigl(L^+,\Phi_T(\by)\bigr)$.
\end{proof}

\begin{figure}[h!]
\begin{center}
\scalebox{0.28}{\includegraphics{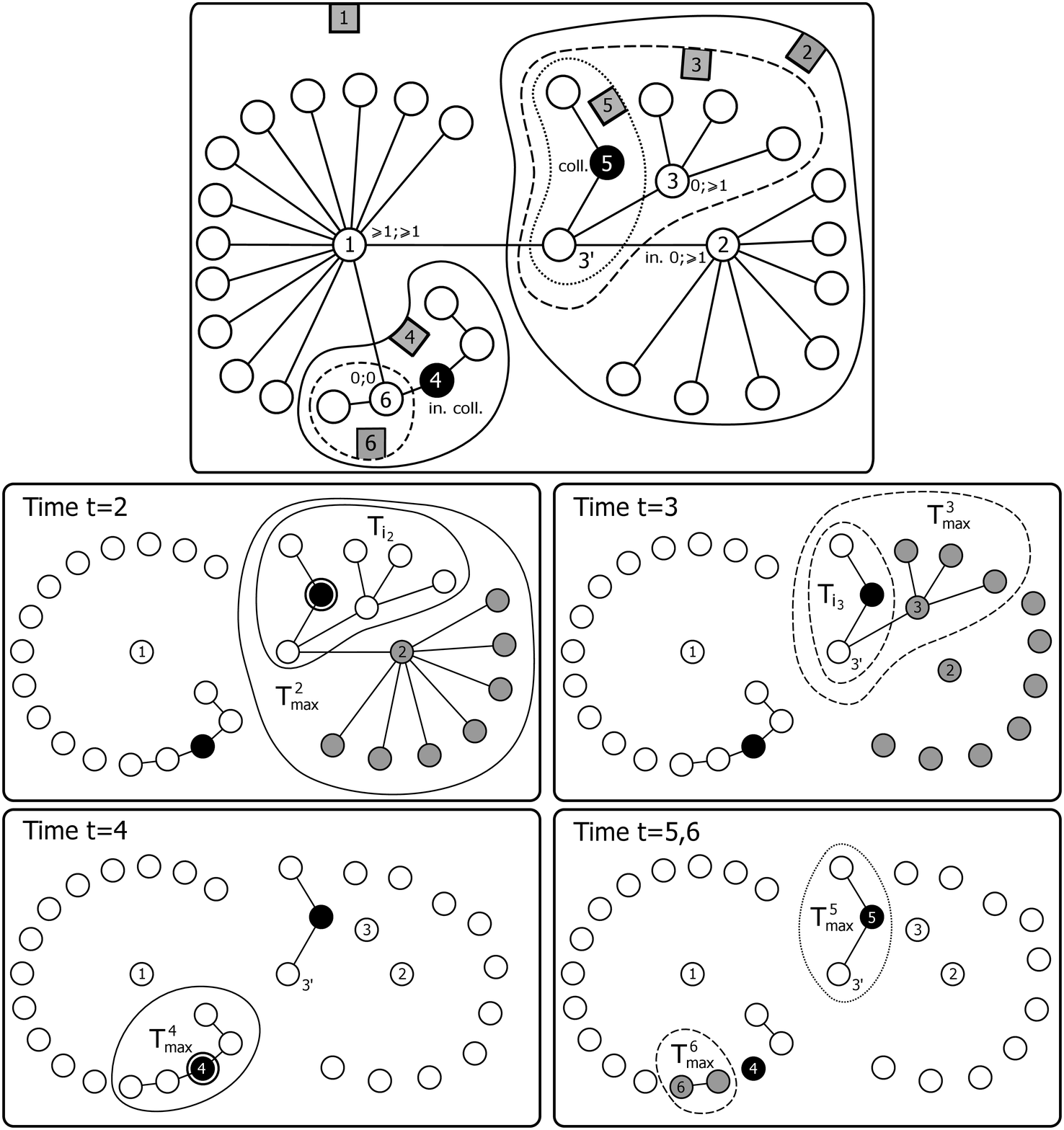}}
\end{center}
\caption{
The upper pane illustrates the different kinds of nodes chosen by \sel. 
Numbers in the square tags indicate the first six subtrees $T^t_{\max}$,
and their associated nodes $i_t$, selected by \sel. Node $i_1$ is a $[\ge1;\ge1]$-node, 
$i_2$ is an initial $[0;\ge1]$-node, $i_3$ is a (noninitial) $[0;\ge1]$-node, 
$i_4$ is an initial collision node, $i_5$ is a (noninitial) collision node, 
and $i_6$ is a $[0;0]$-node. As in \figref{f:sel_pred}, we denote by $3'$ the fork node 
generated by the inclusion of $i_3$ into $L_{\sel}$. 
Note that node $i_6$ may be chosen arbitrarily among the four nodes in 
$T^4_{\max} \setminus i_4$. The two black nodes are the set of nodes we are competing against, 
i.e., the nodes in the query set $L$.
Forest $T \setminus L$ is made up of one large subtree and two small subtrees.  
In the lower panes we illustrate some steps of the proof of \lemref{l:upsilon}, 
with reference to the upper pane.
\underline{Time $t=2$:} Trees $T^2_{\max}$ and $T_{i_2}$ are shown. As explained in the proof,
$|T_{i_2}| \le |T^2_{\max} \setminus T_{i_2}|$. The circled black node is captured by $i_2$. 
The nodes of tree $T^2_{\max} \setminus T_{i_2}$ are shaded, and can be used for mapping any 
$\zeta$-component through $\mu_2$. 
\underline{Time $t=3$:} Trees $T^3_{\max}$ and $T_{i_3}$ are shown. 
Again, one can easily verify that $|T_{i_3}| \le |T^3_{\max} \setminus T_{i_3}|$. 
As before, the nodes of $T^3_{\max} \setminus T_{i_3}$ are shaded, and can be used for mapping 
any $\zeta$-component via $\mu_2$. 
The reader can see that, according to the injectivity of $\mu_2$, 
these grey nodes are well separated from the ones in $T^2_{\max} \setminus T_{i_2}$. 
\underline{Time $t=4$:} $T^4_{\max}$ and the initial collision node $i_4$ are depicted.
The latter is enclosed in a circled black node since it captures itself. 
\underline{Time $t=5,6$:} We depicted trees $T^5_{\max}$ and $T^6_{\max}$, together with nodes 
$i_5$ and $i_6$. Node $i_5$ is a collision node, which is not initial 
since it was already captured by the $[0;\ge1]$-node $i_2$. Node $i_6$ is a $[0;0]$ node, 
so that the whole tree $T^6_{\max}$ is completely included in a component 
(the largest, in this case) of $T \setminus L$. Tree $T^6_{\max}$ can be used for mapping 
via $\mu_3$ any $\zeta$-component. 
The resulting forest $T \setminus L_6$ includes several single-node trees and one two-node tree. 
If $i_6$ is the last node selected by $L_{\sel}$, then each component of $T \setminus L_6$ can 
be exploited by mapping $\mu_1$, since in this specific case none of these components contains nodes 
of $L$, i.e., there are no $\zeta$-components left.
\label{f:3maps}
}
\end{figure}

The next lemma, whose proof is a bit involved, 
provides the relevant properties of the component function $\Upsilon(\cdot,\cdot)$.
Figure~\ref{f:3maps} helps visualizing the main ingredients of the proof.
\begin{lemma}\label{l:upsilon}
Given a tree $T = (V,E)$, for all node subsets $L \subseteq V$
such that $|L| \le \tfrac{1}{2}|L_{\sel}|$ and for all integers $k$, we have:
\enspace (a) $\Upsilon(L_{\sel}, k) \le 5\Upsilon(L, k)$;
\enspace (b) $\Upsilon(L_{\sel},1) \le \Upsilon(L,1)$. 
\end{lemma}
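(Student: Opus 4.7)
My plan is to prove both parts by a coordinated charging argument that injectively maps the contribution of $\Upsilon(L_{\sel},k)$ into the structure of $T\setminus L$, losing only a bounded multiplicative constant. Following the decomposition sketched in Figure~\ref{f:3maps}, I would first classify each selected node $i_t\in L_{\sel}$ by how its split interacts with the competing set $L$: a $[\ge1;\ge1]$-node has nodes of $L$ on both sides of the split of $T^t_{\max}$; a $[0;\ge 1]$-node has $L$-nodes on only one side; a collision node is one with $i_t\in L$; and a $[0;0]$-node has $T^t_{\max}$ already disjoint from $L$. The $[0;\ge 1]$- and collision categories would be further refined into initial and noninitial, where a node is noninitial if it lies in a region of $T$ in which a previous selection has already ``captured'' a node of $L$. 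Since $|L|\le \tfrac{1}{2}|L_{\sel}|$ (and $|L^+|\le 2|L|$ by Lemma~\ref{l:deforked}), the number of $[\ge1;\ge1]$-nodes and of initial nodes of the other types is bounded by a constant multiple of $|L|$, so the bulk of the $\sel$ selections must be either noninitial or of $[0;0]$-type.

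Next I would build the three injective mappings $\mu_1,\mu_2,\mu_3$ indicated in Figure~\ref{f:3maps}. The map $\mu_1$ sends each component of $T\setminus L_{\sel}$ that contains no node of $L$ directly into the enclosing component of $T\setminus L$; this charging loses no size. The map $\mu_2$ handles the ``$\zeta$-components'' adjacent to noninitial $[0;\ge 1]$- and noninitial collision nodes: the central device is that the greedy selection rule together with Lemma~\ref{l:sigma} forces the small side $T_{i_t}$ of each such split to satisfy $|T_{i_t}|\le |T^t_{\max}\setminus T_{i_t}|$, so the larger, $L$-populated side supplies a block of fresh (``shaded'') nodes big enough to absorb the contribution of $T_{i_t}$ into a component of $T\setminus L$. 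The map $\mu_3$ deals with selections inside $[0;0]$-type trees, whose entire $T^t_{\max}$ lies within a single component of $T\setminus L$ and can be absorbed wholesale. After checking that $\mu_1,\mu_2,\mu_3$ have pairwise disjoint images and are each injective, I would sum the sizes they carry to get $\Upsilon(L_{\sel},k)\le 5\,\Upsilon(L,k)$; the constant $5$ would collect the factor~$2$ from Lemma~\ref{l:sigma}, the $|L^+|\le 2|L|$ inflation from Lemma~\ref{l:deforked}, and one more unit coming from the $k$-top-components budget at the boundary.

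For part~(b) I would specialize the same scheme to $k=1$. Because only the single largest component $C^{\star}$ of $T\setminus L_{\sel}$ is being tracked, the factor~$2$ from Lemma~\ref{l:sigma} and the forks overhead can be absorbed exactly: whichever of the three maps accounts for $C^{\star}$ hands it to a component of $T\setminus L$ whose size already meets $|C^{\star}|$, with no multiplicative loss. In particular the absence of any ``$k$-boundary'' term and the fact that only a single ``shaded'' block of nodes needs to be managed reduce the constant from $5$ down to $1$.

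The main obstacle I anticipate is the precise definition and disjointness of the three mappings, and verifying that no node of $T\setminus L$ is overcharged by their combined image. The initial/noninitial distinction among the $[0;\ge 1]$- and collision nodes is exactly what will keep $\mu_2$ single-valued, and the bulk of the work will be a local case analysis around each $i_t$, invoking $\sigma(T^t_{\max},i_t)\le \tfrac{1}{2}|T^t_{\max}|$ from Lemma~\ref{l:sigma} at every step to certify that the ``shaded'' side of each split has enough room to absorb its ``unshaded'' counterpart.
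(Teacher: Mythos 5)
Your outline for part~(a) follows the same architecture as the paper's proof (the four-way classification of selected nodes, the three charging maps, and the use of Lemma~\ref{l:sigma} to certify that the $L$-free side of each $[0;\ge1]$-split is at least half of $T^t_{\max}$), but two essential ingredients are missing. First, $\mu_3$ is not actually constructed: the objects it must absorb are the leftover $\zeta$-components (components of $T\setminus L_{\sel}$ containing nodes of $L$), not the $[0;0]$-trees themselves, and when several $[0;0]$-nodes are selected nested inside one another their trees $T^{t_1}_{\max}\supseteq\dots\supseteq T^{t_k}_{\max}$ overlap, so you cannot charge one $\zeta$-component per $[0;0]$-node to ``fresh'' room without an argument. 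The paper groups the $[0;0]$-nodes into equivalence classes and invokes Lemma~\ref{l:split} to get $|T^{t_k}_{\max}|\le 2|T^{t_1}_{\max}|/k$, so that $k$ $\zeta$-components (each of size at most $|T^{t_k}_{\max}|$ by the greedy rule) can all be charged to the single tree $T^{t_1}_{\max}$ with total ratio $2$; this step, and the counting $|V_{0;0}|+|V_{0;1}|>|Z|$ that guarantees $\mu_2,\mu_3$ jointly cover every $\zeta$-component, are the heart of the proof and are absent from your sketch. Relatedly, your accounting of the constant $5$ is wrong: it is not a product of a factor $2$ from Lemma~\ref{l:sigma}, a factor from Lemma~\ref{l:deforked} (which is not used in this lemma at all), and a boundary term; it is simply $1+2+2$, because a single component of $T\setminus L$ can simultaneously receive images under $\mu_1$ (ratio $1$), $\mu_2$ (ratio $2$), and $\mu_3$ (ratio $2$).

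Part~(b) cannot be obtained by ``specializing the same scheme to $k=1$.'' The maps $\mu_2$ and $\mu_3$ are lossy by a factor of $2$, and if the largest component of $T\setminus L_{\sel}$ happens to be a $\zeta$-component the charging argument only yields $\Upsilon(L_{\sel},1)\le 2\Upsilon(L,1)$; no bookkeeping of ``shaded blocks'' removes that factor. The paper proves (b) by an entirely separate counting argument: it shows that every step at which the current largest component could still exceed $\Upsilon(L,1)$ must be a $[\ge1;\ge1]$-node, an initial $[0;\ge1]$-node, or an initial collision node (each $[0;0]$-step and each step inside an already-captured region produces a tree contained in a component of $T\setminus L$, via Lemma~\ref{l:sigma}), and there are at most $2|L|-1\le|L_{\sel}|-1$ such steps, so by the time $\sel$ has made all its selections the largest surviving component fits inside a single component of $T\setminus L$. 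You should supply this argument explicitly; the initial/noninitial distinction you introduce is the right tool for it, but in your write-up it is attached to $\mu_2$ in part~(a), where the paper does not need it.
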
 
\begin{proof}
We prove part~(a) by constructing, via $\sel$, 
three bijective mappings $\mu_1, \mu_2, \mu_3 : \scP_{\sel} \to \scP_L$,
where $\scP_{\sel}$ is a suitable partition of $T \setminus L_{\sel}$, 
$\scP_L$ is a subset of $2^V$ such that any $S \in \scP_L$ is all contained
in a single connected component of $T \setminus L$, 
and the union of the domains of the three mappings covers the whole set 
$T \setminus L_{\sel}$.
The mappings $\mu_1$, $\mu_2$ and $\mu_3$ are shown to satisfy, for all forests\footnote
{
In this proof, $|\mu(A)|$ denotes the number of nodes in the set (of nodes) $\mu(A)$.
Also, with a slight abuse of notation, for all forests $F\in \scP_{\sel}$, 
we denote by $|F|$ the sum of the number of nodes in all trees of $F$.
Finally, whenever $F \in \scP_{\sel}$ contains a single tree, we refer to $F$
as it were a tree, rather than a (singleton) forest containing only one tree.
} 
$F \in \scP_{\sel}$,
\[
    |F| \le |\mu_1(F)|,\qquad |F| \le 2|\mu_2(F)|,\qquad |F| \le 2|\mu_3(F)|~~.
\]
%
%
Since each $S \in \scP_L$ is all contained in a connected component of 
$T \setminus L$, this we will enable us to conclude that, for each
tree $T' \in T \setminus L$, the forest of all trees $T \setminus L_{\sel}$
mapped (via any of these mappings) to any node subset of $T'$ has at most
five times the number of nodes of $T'$. This would prove the statement in~(a).


The construction of these mappings requires some auxiliary definitions.
We call $\zeta$-component each connected component of $T \setminus L_{\sel}$
containing at least one node of $L$. Let
$i_t$ be the $t$-th node selected by \sel\ during the incremental construction 
of the query set $L_{\sel}$. We distinguish between four kinds of nodes chosen by \sel
---see \figref{f:3maps} for an example.

Node $i_t$ is:
\begin{enumerate}
\item A \textit{collision node} if it belongs to $L_{\sel} \cap L$;
\item a \textit{$[0;0]$-node} if, at time $t$, the tree $T^t_{\max}$ does not 
contain any node of $L$;
\item a \textit{$[0;\ge1]$-node} if, at time $t$, the tree $T^t_{\max}$ contains 
$k\ge 1$ nodes $j_1,\ldots,j_k \in L$ all belonging to the same connected 
component of $T^t_{\max} \setminus \{i_t\}$;
\item a \textit{$[\ge1;\ge1]$-node} if $i_t \not\in L$ and, at time $t$, the tree $T^t_{\max}$ 
contains $k\ge 2$ nodes $j_1,\ldots,j_k \in L$, which do not 
belong to the same connected component of $T^t_{\max} \setminus \{i_t\}$.
\end{enumerate}
We now turn to building the three mappings. 

$\mu_1$ simply maps each tree $T' \in T \setminus L_{\sel}$ that is {\em not} a $\zeta$-component 
to the node set of $T'$ itself. This immediately
implies $|F| \le |\mu_1(F)|$ for all forests $F$ (which are actually single trees) 
in the domain of $\mu_1$.
Mappings $\mu_2$ and $\mu_3$ deal with the $\zeta$-components of $T\setminus L_{\sel}$.
Let $Z$ be the set of all such $\zeta$-components, and denote by $V_{0;0}$, $V_{0;1}$,
and $V_{1;1}$ the set of all $[0;0]$-nodes, $[0;\ge1]$-nodes, and $[\ge1;\ge1]$-nodes, 
respectively.
Observe that $|V_{1;1}| < |L|$. Combined with
the assumption $|L_{\sel}| \ge 2|L|$, this implies that
$|V_{0;0}| + |V_{0;1}|$ plus the total number of collision nodes must be larger than $|L|$;
as a consequence, $|V_{0;0}| + |V_{0;1}| > |Z|$. 
Each node $i_t \in V_{0;1}$ chosen by $\sel$ splits the tree $T^t_{\max}$ into 
one component $T_{i_t}$ containing at least one node of $L$ and one or more components
all contained in a single tree $T'_{i_t}$ of $T \setminus L$. Now mapping $\mu_2$ can be
constructed incrementally in the following way. 
For each $[0;\ge1]$-node selected by $\sel$ at time $t$, $\mu_2$ sequentially
maps any $\zeta$-component generated 
to the set of nodes in $T^t_{\max} \setminus T_{i_t}$, the latter being just
a subset of a component of $T\setminus L$. A future time step $t' > t$
might feature the selection of a new $[0;\ge1]$-node within $T_{i_t}$, but
mapping $\mu_2$ would cover a different subset of such component of $T\setminus L$.
Now, applying \lemref{l:sigma} to tree $T^t_{\max}$, we can see that 
$|T^t_{\max} \setminus T_{i_t}| \ge |T^t_{\max}|/2$. 
Since the selection rule of \sel\ guarantees that the number of nodes in 
$T^t_{\max}$ is larger than the number of nodes of any $\zeta$-component, 
we have $|F| \le 2|\mu_2(F)|$, for any $\zeta$-component $F$ considered in 
the construction of $\mu_2$.

Mapping $\mu_3$ maps all the remaining $\zeta$-components that
are not mapped through $\mu_2$. Let $\sim$ be an equivalence relation 
over $V_{0;0}$ defined as follows: $i \sim j$ iff $i$ is connected to $j$ 
by a path containing only $[0;0]$-nodes
and nodes in $V \setminus (L_{\sel} \cup L)$.
Let $i_{t_1}, i_{t_2}, \ldots, i_{t_k}$ be the sequence of nodes of any given equivalence
class $[C]_{\sim}$, sorted according to $\sel$'s chronological selection.
\lemref{l:split} applied to tree $T^{t_1}_{\max}$ shows that  
$|T^{t_k}_{\max}| \le 2|T^{t_1}_{\max}|/k$. Moreover,
the selection rule of \sel\ guarantees that the number of nodes of $T^{t_k}_{\max}$ 
cannot be smaller than the number of nodes of any $\zeta$-component. 
Hence, for each equivalence class $[C]_{\sim}$ containing $k$ nodes of type
$[0;0]$, we map through $\mu_3$ a set $F_{\zeta}$ of $k$ arbitrarily chosen 
$\zeta$-components to $T^{t_1}_{\max}$. Since the size of each $\zeta$-component
is $\leq |T^{t_k}_{\max}|$, we can write 
$|F_{\zeta}| \le k|T^{t_k}_{\max}| \le 2|T^{t_1}_{\max}|$,
which implies $|F_{\zeta}| \le 2|\mu_3(F_{\zeta})|$ for all $F_{\zeta}$ in
the domain of $\mu_3$.
Finally, observe that the number of $\zeta$-components that are not mapped through $\mu_2$ 
cannot be larger than $|V_{0;0}|$, thus the union of mappings $\mu_2$ and $\mu_3$ do actually 
map all $\zeta$-components. 
This, in turn, implies that the union of the domains of the three mappings covers the whole set 
$T \setminus L_{\sel}$, thereby concluding the proof of part (a).

The proof of (b) is built on the definition of collision nodes, $[0;0]$-nodes,
$[0;\ge1]$-nodes and $[\ge1;\ge1]$-nodes given in part (a). 
Let $L_t \subseteq L_{\sel}$ be the set of the first $t$ nodes
chosed by \sel. 
Here, we make a further distinction within the collision and $[0;\ge1]$-nodes. 
We say that during the selection of node $i_t \in V_{0;1}$, 
the nodes in $L \cap T^t_{\max}$ are {\em captured} by $i_t$. This notion of 
capture extends to collision nodes by saying that a collision node $i_t \in L \cap L_{\sel}$ 
just {\em captures itself}.
We say that $i_t$ is an \textit{initial} $[0;\ge1]$-node (resp., {\em initial} collision node)
if $i_t$ is a $[0;\ge1]$-node (resp., collision node) such that the whole set of nodes in $L$
captured by $i_t$ contains no nodes captured so far. See Figure \ref{f:3maps} for reference.
%
%
The simple observation leading to the proof of part (b) is the following. If
$i_t$ is a $[0;0]$-node, then $T^t_{\max}$ cannot be larger than
the component of $T \setminus L$ that contains $T^t_{\max}$, which in turn
cannot be larger than $\Upsilon(L,1)$. This would already imply
$\Upsilon(L_{t-1},1) \le \Upsilon(L,1)$.
Let now $i_t$ be an initial $[0;\ge1]$-node and $T_{i_t}$ be the 
unique component of $T^t_{\max} \setminus \{i_t\}$ containing
one or more nodes of $L$. Applying \lemref{l:sigma} to tree $T^t_{\max}$ we can 
see that $|T_{i_t}|$ cannot be larger than $|T^t_{\max} \setminus T_{i_t}|$,
which in turn cannot be larger than $\Upsilon(L,1)$.
If at time $t' > t$ the procedure $\sel$ selects $i_{t'} \in T_{i_t}$ then
$|T^{t'}_{\max}| \leq |T_{i_t}| \leq \Upsilon(L,1)$.
%
%
Hence, the maximum integer $q$ such that $\Upsilon(L_{q},1) > \Upsilon(L,1)$
is bounded by the number of $[\ge1;\ge1]$-nodes plus the number of initial $[0;\ge1]$-nodes plus 
the number of initial collision nodes.
We now bound this sum as follows.
The number of $[\ge1;\ge1]$-nodes is clearly bounded by $|L|-1$. 
Also, any initial $[0;\ge1]$-node or initial collision node selected by $\sel$ 
captures at least a new node in $L$, thereby implying that the total number of 
initial $[0;\ge1]$-node or initial collision node must be $\leq |L|$.
After $q = 2|L|-1$ rounds, we are sure that the size of the largest tree of 
$T^{q}_{\max}$ is not larger than the size of the largest component of 
$T\setminus L$, i.e., $\Upsilon(L,1)$~.
\end{proof}    

%
%

We now put the above lemmas together to prove our main result 
concerning the number of mistakes made by \pred\ on the query set chosen by \sel.
\begin{theorem}\label{t:together}
For all trees $T$ and all cutsize budgets $K$,
the number of mistakes made by \pred\ on the query set $L_{\sel}^+$ satisfies 
\[
    m_{\pred}(L_{\sel}^+, K)
\le 
    \min_{L \subseteq V \,:\, |L| \le \tfrac{1}{8}|L_{\sel}^+|}
    10\,\lb\bigl(L,K\bigr)~.
\]
\end{theorem}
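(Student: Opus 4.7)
The plan is to chain together the lemmas and theorem already proved, bridging from \pred's performance down to the lower bound $\lb(L,K)$ via the component function $\Upsilon$. Concretely, I would argue that for any competitor set $L\subseteq V$ with $|L|\le \tfrac{1}{8}|L_{\sel}^+|$,
\[
m_{\pred}(L_{\sel}^+,K)
\;\overset{(1)}{\le}\; \Upsilon(L_{\sel}^+,K)
\;\overset{(2)}{\le}\; \Upsilon(L_{\sel},K)
\;\overset{(3)}{\le}\; 5\,\Upsilon(L^+,K)
\;\overset{(4)}{\le}\; 10\,\lb(L^+,K)
\;\overset{(5)}{\le}\; 10\,\lb(L,K)~,
\]
and then take the minimum over admissible $L$.

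For step (1), I would note that $L_{\sel}^+$ is 0-forked by construction, so \lemref{l:ub_bto} gives $m_{\pred}(L_{\sel}^+,\by)\le \Upsilon\bigl(L_{\sel}^+,\Phi_T(\by)\bigr)$; taking the maximum over $\by$ with $\Phi_T(\by)\le K$ and using the obvious monotonicity of $\Upsilon$ in its second argument yields $m_{\pred}(L_{\sel}^+,K)\le \Upsilon(L_{\sel}^+,K)$. Step (2) would rest on the observation that $\Upsilon(\cdot,K)$ is non-increasing in its first argument: since $L_{\sel}\subseteq L_{\sel}^+$, every component of $T\setminus L_{\sel}^+$ is contained in a component of $T\setminus L_{\sel}$, and a short mapping argument shows that the sum of the $K$ largest component sizes can only shrink when we add nodes to the query set. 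Step (4) is immediate from \thmref{th:lb} applied to the 0-forked set $L^+$, while step (5) is the content of the Remark following \thmref{th:lb}: any adversarial strategy against $L^+$ transfers to its subset $L$.

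The delicate point is step (3), where I would invoke \lemref{l:upsilon}(a) with the role of the lemma's ``$L$'' played by $L^+$. This requires checking the size condition $|L^+|\le \tfrac{1}{2}|L_{\sel}|$. Using \lemref{l:deforked} twice we obtain
\[
|L^+|\;\le\; 2|L|\;\le\; \tfrac{1}{4}|L_{\sel}^+|\;\le\; \tfrac{1}{2}|L_{\sel}|~,
\]
so the hypothesis of \lemref{l:upsilon}(a) is met and the factor $5$ is legitimate. Combining the five steps and minimising over the competitor $L$ yields the desired bound, with the constant $10=5\cdot 2$ arising from the two nontrivial factors. The main obstacle is purely bookkeeping in step (3): picking the right competitor (namely $L^+$ rather than $L$) so that the size constraint of \lemref{l:upsilon}(a) is satisfied while still letting us apply \thmref{th:lb} to a 0-forked set, and then invoking \lemref{l:deforked} and the chain $|L_{\sel}|\ge \tfrac{1}{2}|L_{\sel}^+|$ to close the numerical loop.
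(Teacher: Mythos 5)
Your proposal is correct and follows essentially the same route as the paper's proof: the identical chain $m_{\pred}(L_{\sel}^+,K)\le\Upsilon(L_{\sel}^+,K)\le\Upsilon(L_{\sel},K)\le 5\Upsilon(L^+,K)\le 10\,\lb(L^+,K)\le 10\,\lb(L,K)$, with the same verification $|L^+|\le 2|L|\le\tfrac{1}{4}|L_{\sel}^+|\le\tfrac{1}{2}|L_{\sel}|$ of the hypothesis of \lemref{l:upsilon}(a). Your accounting of the two applications of \lemref{l:deforked} (once to $L$ and once to $L_{\sel}$) is in fact slightly more explicit than the paper's own wording.
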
  
\begin{proof}
Pick any $L \subseteq V$ such that $|L| \le \tfrac{1}{8}|L_{\sel}^+|$.
Then
\begin{align*}
    m_{\pred}(L_{\sel}^+,K)
\!\!\overset{\text{(Lem.~\ref{l:ub_bto})}}{\le}\!\!
    \Upsilon(L_{\sel}^+,K) 
\overset{\text{(A)}}{\le}
    \Upsilon(L_{\sel},K)
\!\!\overset{\text{(Lem.~\ref{l:upsilon} (a))}}{\le}\!\!
    5\Upsilon(L^{+},K)
\!\!\overset{\text{(Thm.~\ref{th:lb})}}{\le}\!\!
    10\,\lb(L^{+},K)
\overset{\text{(B)}}{\le}
    10\,\lb(L,K)~.
\end{align*}
Inequality~(A) holds because $L_{\sel} \subseteq L_{\sel}^+$, and thus $T \setminus L_{\sel}^+$
has connected components of smaller size than $L_{\sel}$. In order to apply \lemref{l:upsilon} (a),
we need the condition $|L^+| \le \tfrac{1}{2}|L_{\sel}|$. 
This condition is seen to hold after combining \lemref{l:deforked} with our
assumptions: $|L^+| \le 2|L| \le \tfrac{1}{4}|L_{\sel}^+| \le \tfrac{1}{2}|L_{\sel}|$.
Finally, inequality~(B) holds because any adversarial strategy using query set $L$ can
also be used with the larger query set $L^+ \supseteq L$.
\end{proof}

Note also that \thmref{th:lb} and \lemref{l:ub_bto} imply the following
statement about the optimality of \pred\ over 0-forked query sets.
\begin{corollary}
\label{th:bto_f}
For all trees $T$, for all cutsize budgets $K$,
and for all 0-forked query sets $L^+ \subseteq V$,
the number of mistakes made by \pred\ satisfies
$
    m_{\pred}(L^+,K) \le 2\lb\bigl(L^+,K\bigr)
$.
\end{corollary}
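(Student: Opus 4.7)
The plan is to chain together the two results just established: Lemma~\ref{l:ub_bto} gives an upper bound on $m_{\pred}(L^+,\by)$ in terms of the component function $\Upsilon$, while Theorem~\ref{th:lb} lower bounds $\lb(L^+,K)$ by half of $\Upsilon(L^+,K)$. Since both inequalities go through $\Upsilon$, the corollary should follow by a short deterministic computation with no new combinatorial input.

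Concretely, I would start from the definition
\[
    m_{\pred}(L^+,K) = \max_{\by \,:\, \Phi_T(\by) \le K} m_{\pred}(L^+,\by),
\]
and apply Lemma~\ref{l:ub_bto} inside the maximum to obtain
\[
    m_{\pred}(L^+,K) \le \max_{\by \,:\, \Phi_T(\by) \le K} \Upsilon\bigl(L^+,\Phi_T(\by)\bigr).
\]
The next step I would verify is a simple monotonicity observation: $\Upsilon(L^+,k)$ is non-decreasing in $k$, since enlarging the number of components summed can only add non-negative contributions (and it is capped at $|V \setminus L^+|$). This is immediate from the definition of $\Upsilon$ as the sum of the sizes of the $k$ largest components of $T \setminus L^+$. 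Hence, whenever $\Phi_T(\by) \le K$, we have $\Upsilon(L^+,\Phi_T(\by)) \le \Upsilon(L^+,K)$, giving
\[
    m_{\pred}(L^+,K) \le \Upsilon(L^+,K).
\]

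Finally, since $L^+$ is 0-forked by hypothesis, Theorem~\ref{th:lb} applies and yields $\Upsilon(L^+,K) \le 2\,\lb(L^+,K)$. Stringing the bounds together gives the claimed inequality $m_{\pred}(L^+,K) \le 2\,\lb(L^+,K)$. There is no real obstacle here beyond the monotonicity check; the work was done in the two earlier results, and the corollary is really a packaging statement asserting that \pred\ is optimal up to a factor of $2$ on any 0-forked query set, without any further assumption on how $L^+$ was chosen.
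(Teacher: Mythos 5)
Your proof is correct and follows exactly the route the paper intends: the paper derives this corollary by simply combining Lemma~\ref{l:ub_bto} with Theorem~\ref{th:lb}, and your explicit monotonicity check on $\Upsilon(L^+,k)$ in $k$ is the only (easy) detail the paper leaves implicit. Nothing further is needed.
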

In the rest of this section we derive a more intepretable bound
on $m_{\pred}(L^+, \by)$ based on the function $\Psiinv$ introduced
in~\cite{gb09}. To this end, we prove that $L_{\sel}$ minimizes
$\Psiinv$ up to constant factors, and thus is an optimal query
set according to the analysis of~\cite{gb09}.

For any subset $V' \subseteq V$,
let $\Gamma(V',V \setminus V')$ be the number of edges between nodes
of $V'$ and nodes of $V \setminus V'$. Using this notation, we can
write
\[
    \Psiinv(L) = \max_{\emptyset \not\equiv V' \subseteq V\setminus L}
    \frac{|V'|}{\Gamma(V',V \setminus V')}~.
\]
\begin{lemma}\label{l:psi_upsilon}
For any tree $T = (V,E)$ and any $L \subseteq V$ the following holds.
\begin{itemize}
\item [(a)] A maximizer of $\frac{|V'|}{\Gamma(V',V \setminus V')}$ 
exists which is included in the node set of a single
component of $T\setminus L$;
\item [(b)] $\Psiinv(L) \le \Upsilon(L,1)$.
\end{itemize}
\end{lemma}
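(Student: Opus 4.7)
The plan is to prove (a) by a simple decomposition argument and then derive (b) as an immediate consequence. For part~(a), I would take any maximizer $V' \subseteq V\setminus L$ of the ratio and partition it as $V' = \bigsqcup_k V'_k$, where $V'_k = V' \cap C_k$ and the $C_k$ are the distinct components of $T\setminus L$ that $V'$ meets. The key structural observation is that, since the $C_k$ are the connected components of $T\setminus L$, no edge of $E$ directly joins $C_k$ and $C_{k'}$ for $k\ne k'$: every edge of $T$ is either internal to some $C_k$ or has an endpoint in $L$. Consequently the boundary-edge counts are additive,
\[
    \Gamma(V', V\setminus V') \;=\; \sum_k \Gamma(V'_k, V\setminus V'_k),
\]
while $|V'| = \sum_k |V'_k|$ trivially. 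Applying the mediant inequality gives
\[
    \frac{|V'|}{\Gamma(V', V\setminus V')} \;\le\; \max_k \frac{|V'_k|}{\Gamma(V'_k, V\setminus V'_k)},
\]
so some $V'_k$, which by construction lies inside a single component of $T\setminus L$, already attains the maximum.

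For part~(b), I would invoke~(a) to restrict attention to a maximizer $V' \subseteq C$ where $C$ is a single component of $T\setminus L$. Assuming $L\ne\emptyset$ (the $L=\emptyset$ case being trivial, since then $\Upsilon(L,1)=|V|$ and any valid $V'$ satisfies $|V'|\le|V|$), we have $V' \subsetneq V$, and the connectedness of $T$ forces $\Gamma(V', V\setminus V') \ge 1$. Combined with $|V'| \le |C| \le \Upsilon(L,1)$ (the last inequality being exactly the definition of $\Upsilon(\cdot,1)$ as the size of the largest component of $T\setminus L$), this yields
\[
    \Psiinv(L) \;=\; \frac{|V'|}{\Gamma(V', V\setminus V')} \;\le\; |V'| \;\le\; |C| \;\le\; \Upsilon(L,1).
\]

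The one point that really needs care is the additivity of $\Gamma$ across the partition $\{V'_k\}$: this rests on the tree property that distinct components of $T\setminus L$ cannot be joined by a single edge in $T$ --- in a general graph this would fail, and indeed neither (a) nor (b) would be true verbatim there. Once that observation is in place, the remaining steps are essentially a one-line mediant inequality for (a) and elementary bounding for (b), so I do not expect any further obstacle.
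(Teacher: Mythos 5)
Your proof is correct and follows essentially the same route as the paper's: the same decomposition of a maximizer across the components of $T\setminus L$, the same additivity of $\Gamma$ (which you justify more explicitly than the paper does), and the same mediant inequality $\bigl(\sum_i a_i\bigr)/\bigl(\sum_i b_i\bigr)\le\max_i a_i/b_i$, with part (b) then following immediately. The only cosmetic difference is that the paper phrases (a) as a proof by contradiction while you argue directly, and you spell out the one-line derivation of (b) that the paper leaves implicit.
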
 
\begin{proof}
Let $V'_{\max}$ be any maximizer of $\frac{|V'|}{\Gamma(V',V \setminus V')}$.
For the sake of contradiction, assume that the nodes of $V'_{\max}$ belong to
$k \ge 2$ components $\scT_1,\scT_2,\ldots,\scT_k\in T\setminus L$.
Let $V'_i \subset V'_{\max}$ be the subset of nodes included in the node set
of $\scT_i$, for $i=1,\dots,k$. Then $|V'| = \sum_{i \le k} |V'_i|$ and
$\Gamma(V',V \setminus V') = \sum_{i \le k}\Gamma(V'_i,V \setminus V'_i)$.
Now let $i^* = \mathrm{argmax}_{i \le k} |V'_i|/\Gamma(V'_i,V \setminus V'_i)$.
Since $\bigl(\sum_i a_i\bigr)\big/\bigl(\sum_i b_i\bigr) \le \max_i a_i/b_i$
for all $a_i,b_i \ge 0$, we immediately obtain $\Psi(V'_{i^*}) \ge \Psi(V'_{\max})$,
contradicting our assumption. This proves (a). Part (b) is an immediate consequence
of (a).
\end{proof}
\begin{lemma}\label{l:upsilon_psi}
For any tree $T = (V,E)$ and any $0$-forked subset $L^+ \subseteq V$ we have
$
    \Upsilon(L^+,1) \le 2\Psiinv(L^+)
$.
\end{lemma}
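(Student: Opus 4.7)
The plan is to exhibit an explicit witness $V'$ in the definition of $\Psi^{*}(L^{+})$ that achieves a ratio of at least $\tfrac{1}{2}\Upsilon(L^+,1)$. The natural candidate is the vertex set of the largest connected component of $T \setminus L^{+}$, which by definition has size exactly $\Upsilon(L^{+},1)$.

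First I would let $\scT$ be a largest connected component of $T \setminus L^{+}$ and set $V' = V(\scT) \subseteq V \setminus L^{+}$. Since $L^{+}$ is $0$-forked, $\scT$ is a hinge-tree, hence either a $1$-hinge-tree or a $2$-hinge-tree. The key observation is that the number of edges between $\scT$ and the rest of the tree, i.e., $\Gamma(V',V \setminus V')$, equals the number of connection nodes of $\scT$: because $T$ is a tree, each connection node is joined to $\scT$ by a single edge, and no other edge leaves $V'$ (any edge from $V'$ to $V \setminus V'$ must go to an $L^{+}$-node adjacent to $\scT$, which is by definition a connection node).

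Therefore $\Gamma(V',V \setminus V') \in \{1,2\}$, and in either case
\[
\Psi^{*}(L^{+}) \;\ge\; \frac{|V'|}{\Gamma(V',V \setminus V')} \;\ge\; \frac{|\scT|}{2} \;=\; \frac{\Upsilon(L^{+},1)}{2},
\]
yielding the claim. There is no real obstacle here; the proof is essentially a direct unpacking of the $0$-forkedness hypothesis (which forces every component of $T \setminus L^{+}$ to be a hinge-tree with at most two connection nodes) together with the tree property (which gives a single edge per connection node). The factor of $2$ in the statement is tight and is attained precisely when the largest component happens to be a $2$-hinge-tree.
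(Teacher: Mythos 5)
Your proof is correct and follows essentially the same route as the paper's: both take the largest component of $T \setminus L^{+}$ as the witness $V'$, use $0$-forkedness to conclude it is a hinge-tree with at most two connection nodes (hence $\Gamma(V',V\setminus V') \le 2$), and plug this into the definition of $\Psi^{*}$. Your added remark that the tree structure forces exactly one edge per connection node is a correct (and slightly more explicit) justification of the bound on $\Gamma$.
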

\begin{proof}
Let $\scT_{\max}$ 
be the largest component of $T \setminus L^+$ and
$V_{\max}$ be its node set. Since $L^+$ is a 0-forked query set, $\scT_{\max}$
must be either a 1-hinge-tree or a 2-hinge-tree. Since the only edges that
connect a hinge-tree to external nodes are the edges leading to connection
nodes, we find that $\Gamma(V_{\max},V \setminus V_{\max}) \le 2$.
We can now write
\[
    \Psiinv(L^+)
=
    \max_{\emptyset \not\equiv V' \subseteq V\setminus L^+}\frac{|V'|}{\Gamma(V',V \setminus V')}
\ge
    \frac{|V_{\max}|}{\Gamma(V_{\max},V \setminus V_{\max})}
\ge
    \frac{|V_{\max}|}{2}
=
    \frac{\Upsilon(L^+,1)}{2}
\]
thereby concluding the proof.
\end{proof}
\begin{lemma}\label{l:psi0_psi}
For any tree $T = (V,E)$ and any subset $L \subseteq V$ we have
$
    \Psiinv(L^+) \le \Psiinv(L)
$.
\end{lemma}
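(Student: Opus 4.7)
The plan is to observe that this is essentially a monotonicity statement about the maximum defining $\Psiinv$. By construction, $L^{+} = L \cup \fork(L) \supseteq L$, so $V \setminus L^{+} \subseteq V \setminus L$. Hence every non-empty $V' \subseteq V \setminus L^{+}$ is also a non-empty subset of $V \setminus L$, i.e.\ it is feasible in the maximization defining $\Psiinv(L)$ too.

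The next key observation is that the objective $\frac{|V'|}{\Gamma(V', V \setminus V')}$ does not depend on $L$ at all: the numerator only sees $V'$, and the denominator counts edges of $T$ between $V'$ and its complement in the full vertex set $V$. Therefore the two maxima are taken of the same function, but over nested feasible sets.

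From this, I would conclude immediately that
\[
\Psiinv(L^{+}) = \max_{\emptyset \not\equiv V' \subseteq V \setminus L^{+}} \frac{|V'|}{\Gamma(V', V \setminus V')} \le \max_{\emptyset \not\equiv V' \subseteq V \setminus L} \frac{|V'|}{\Gamma(V', V \setminus V')} = \Psiinv(L),
\]
since the maximum of a function over a subset is no larger than its maximum over a superset (noting that the smaller feasible set is non-empty whenever $V \setminus L^{+} \neq \emptyset$; the degenerate case $V = L^{+}$ can be handled separately or by adopting the usual convention that a max over an empty set is $-\infty$ or $0$, which trivially satisfies the inequality).

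There is no real obstacle here: the only thing to verify is that the objective is indeed independent of $L$, which follows directly from the definition of $\Gamma$ and $\Psiinv$ as given. The lemma is just the statement that enlarging the forbidden set $L$ shrinks the feasible region of the max, and thus cannot increase its value.
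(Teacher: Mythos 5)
Your proof is correct and is essentially the same as the paper's: both arguments note that $V\setminus L^{+}\subseteq V\setminus L$ while the objective $|V'|/\Gamma(V',V\setminus V')$ is independent of the query set, so the maximum over the smaller feasible region cannot exceed the maximum over the larger one.
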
  
\begin{proof}
Let $V'_{\max}$ be 
any set maximizing $\Psiinv(L^+)$.
Since $V'_{\max} \in V\setminus L^+$, $V'_{\max}$ cannot contain
any node of $L \subseteq L^+$. Hence
\[
    \Psiinv(L)
=
    \max_{\emptyset \not\equiv V' \subseteq V\setminus L}
    \frac{|V'|}{\Gamma(V',V \setminus V')}
\ge
    \frac{|V'_{\max}|}{\Gamma(V'_{\max},V \setminus V'_{\max})}
=
    \Psiinv(L^+)
\]
which concludes the proof.
\end{proof}

We now put together the previous lemmas to show that the query set $L_{\sel}$ minimizes
$\Psiinv$ up to constant factors.
\begin{theorem}\label{th:psi}
For any tree $T = (V,E)$ we have
${\dt \quad
    \Psiinv(L_{\sel})
\le
    \min_{L \subseteq V \,:\, |L| \le \tfrac{1}{4}|L_{\sel}|} 2 \Psiinv(L)
}$.
\end{theorem}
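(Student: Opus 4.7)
The plan is to assemble the four preceding lemmas into a short chain of inequalities. Fix any competitor $L \subseteq V$ with $|L| \le \tfrac{1}{4}|L_{\sel}|$. The strategy is to bridge $\Psiinv(L_{\sel})$ and $\Psiinv(L)$ by passing through the component function $\Upsilon(\cdot,1)$ and through the 0-forked extension $L^+$, exploiting the fact that $\Upsilon(\cdot,1)$ and $\Psiinv$ are equivalent up to constant factors when evaluated on 0-forked sets.

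First I would apply Lemma~\ref{l:psi_upsilon}~(b) to the selected set itself, giving $\Psiinv(L_{\sel}) \le \Upsilon(L_{\sel},1)$. Next I would use Lemma~\ref{l:upsilon}~(b) to bound $\Upsilon(L_{\sel},1)$ by $\Upsilon(L^+,1)$; this step requires the hypothesis $|L^+| \le \tfrac{1}{2}|L_{\sel}|$, which follows from Lemma~\ref{l:deforked} ($|L^+| \le 2|L|$) combined with the standing assumption $|L| \le \tfrac{1}{4}|L_{\sel}|$. Then, since $L^+$ is 0-forked by construction, Lemma~\ref{l:upsilon_psi} gives $\Upsilon(L^+,1) \le 2\Psiinv(L^+)$. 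Finally, Lemma~\ref{l:psi0_psi} yields $\Psiinv(L^+) \le \Psiinv(L)$, closing the chain at $2\Psiinv(L)$.

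Concatenating these four steps produces
\[
\Psiinv(L_{\sel}) \;\le\; \Upsilon(L_{\sel},1) \;\le\; \Upsilon(L^+,1) \;\le\; 2\Psiinv(L^+) \;\le\; 2\Psiinv(L),
\]
and taking the minimum over admissible $L$ gives the claim.

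The only non-mechanical point is verifying that the size hypothesis $|L^+| \le \tfrac{1}{2}|L_{\sel}|$ needed for the second step is actually covered by the looser hypothesis $|L| \le \tfrac{1}{4}|L_{\sel}|$ assumed in the theorem; this is precisely where Lemma~\ref{l:deforked} is invoked, and it also explains the $\tfrac{1}{4}$ factor in the statement. No other obstacle arises: once this bookkeeping is in place, the argument is a routine concatenation of already-proved inequalities, with the 0-forkedness of $L^+$ being what unlocks Lemma~\ref{l:upsilon_psi}.
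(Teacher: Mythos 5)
Your proof is correct and coincides with the paper's own argument: the same four-lemma chain $\Psiinv(L_{\sel}) \le \Upsilon(L_{\sel},1) \le \Upsilon(L^+,1) \le 2\Psiinv(L^+) \le 2\Psiinv(L)$, with the same use of Lemma~\ref{l:deforked} to verify $|L^+| \le 2|L| \le \tfrac{1}{2}|L_{\sel}|$ before invoking Lemma~\ref{l:upsilon}~(b). Nothing is missing.
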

\begin{proof}
Let $L$ be a query set such that $|L| \le |L_{\sel}|/4$.
Then we have the following chain of inequalities:
\[
    \Psiinv(L_{\sel})
\overset{\text{(\lemref{l:psi_upsilon} (b))}}{\le}
    \Upsilon(L_{\sel},1)
\overset{\text{(\lemref{l:upsilon} (b))}}{\le}
    \Upsilon(L^+,1)
\overset{\text{(\lemref{l:upsilon_psi})}}{\le}
    2\Psiinv(L^+)
\overset{\text{(\lemref{l:psi0_psi})}}{\le}
    2\Psiinv(L)~.
\]
In order to apply \lemref{l:upsilon} (b),
we need the condition $|L^+| \le \tfrac{1}{2}|L_{\sel}|$.
This condition holds because, by \lemref{l:deforked},
$|L^+| \le 2|L| \le \tfrac{1}{2}|L_{\sel}|$.
\end{proof}

Finally, as promised, the following corollary contains an interpretable mistake
bound for \pred\ run with a query set returned by \sel.
\begin{corollary}\label{c:promised}
For any labeled tree $(T,\by)$, the number of mistakes
made by \pred\ when run with query set $L^+_{\sel}$ satisfies
\[
    m_{\pred}(L^+_{\sel}, \by)
\le
    4\,\min_{L \subseteq V \,:\, |L| \le \tfrac{1}{8}|L^+_{\sel}|}
    \Psiinv(L)\,\Phi_T(\by)~.
\]
\end{corollary}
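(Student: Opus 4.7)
The plan is to assemble a short chain of inequalities using the lemmas already proven, each step accounting for one of the factors in the final bound. Starting from \lemref{l:ub_bto}, we have $m_{\pred}(L^+_{\sel},\by) \le \Upsilon\bigl(L^+_{\sel},\Phi_T(\by)\bigr)$. By definition, $\Upsilon(L^+_{\sel}, K)$ is the sum of the sizes of the $K$ largest connected components of $T \setminus L^+_{\sel}$, and each such size is at most $\Upsilon(L^+_{\sel}, 1)$; thus trivially $\Upsilon(L^+_{\sel}, \Phi_T(\by)) \le \Phi_T(\by) \cdot \Upsilon(L^+_{\sel}, 1)$. This is the only step that invokes the cutsize, and it is what introduces the $\Phi_T(\by)$ factor into the bound.

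Next, I would replace $\Upsilon(L^+_{\sel}, 1)$ by $\Psiinv$ applied to a competing query set. Since $L^+_{\sel}$ is 0-forked, \lemref{l:upsilon_psi} applies and yields $\Upsilon(L^+_{\sel}, 1) \le 2\Psiinv(L^+_{\sel})$ (this gives a factor~$2$). Then \lemref{l:psi0_psi}, applied with $L_{\sel}$ in the role of $L$, gives $\Psiinv(L^+_{\sel}) \le \Psiinv(L_{\sel})$ without any constant loss. Finally, \thmref{th:psi} gives $\Psiinv(L_{\sel}) \le 2\Psiinv(L)$ for any $L$ with $|L| \le \tfrac{1}{4}|L_{\sel}|$, contributing the remaining factor~$2$. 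Multiplying the collected constants yields the claimed factor~$4$.

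The only non-routine bookkeeping is reconciling the two size conditions. The corollary quantifies over $L$ with $|L| \le \tfrac{1}{8}|L^+_{\sel}|$, while \thmref{th:psi} requires $|L| \le \tfrac{1}{4}|L_{\sel}|$. By \lemref{l:deforked}, $|L^+_{\sel}| \le 2|L_{\sel}|$, so
\[
    |L| \le \tfrac{1}{8}|L^+_{\sel}| \le \tfrac{1}{8}\cdot 2|L_{\sel}| = \tfrac{1}{4}|L_{\sel}|,
\]
so the hypothesis of \thmref{th:psi} is automatically satisfied. All the remaining inequalities in the chain hold without conditions on $L$, so the argument goes through uniformly.

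Since each step above is either a direct citation of a previously established result or the trivial observation $\Upsilon(\cdot,K) \le K\,\Upsilon(\cdot,1)$, I do not expect a real obstacle; the work has already been done in the preceding lemmas, and this corollary is essentially a repackaging of \thmref{th:psi} into a form where the mistake count is expressed as a product $\Psiinv(L)\,\Phi_T(\by)$ rather than through the component function $\Upsilon$.
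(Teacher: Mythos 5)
Your proof is correct, but the first half of your chain takes a genuinely different route from the paper's. The paper gets the intermediate bound $m_{\pred}(L^+_{\sel},\by) \le 2\,\Psiinv(L^+_{\sel})\,\Phi_T(\by)$ in one step by citing Lemma~1 of Guillory and Bilmes~\cite{gb09}, which applies to any mincut predictor (and therefore requires the observation that \pred\ computes a mincut assignment given the labels on $L^+_{\sel}$). You instead derive the same intermediate bound internally: \lemref{l:ub_bto} gives $m_{\pred}(L^+_{\sel},\by) \le \Upsilon(L^+_{\sel},\Phi_T(\by))$, the elementary inequality $\Upsilon(\cdot,K)\le K\,\Upsilon(\cdot,1)$ introduces the factor $\Phi_T(\by)$, and \lemref{l:upsilon_psi} (valid because $L^+_{\sel}$ is 0-forked) converts $\Upsilon(L^+_{\sel},1)$ into $2\Psiinv(L^+_{\sel})$. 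Your route is self-contained --- it avoids the external citation and does not need to argue that \pred\ is a mincut classifier --- at the price of relying on the 0-forked structure of $L^+_{\sel}$, whereas the \cite{gb09} lemma needs no such assumption. From that point on the two arguments coincide: $\Psiinv(L^+_{\sel}) \le \Psiinv(L_{\sel})$ (your appeal to \lemref{l:psi0_psi} is exactly the paper's inequality~(A)), then \thmref{th:psi} contributes the last factor of $2$, and the size bookkeeping $|L| \le \tfrac{1}{8}|L^+_{\sel}| \le \tfrac{1}{4}|L_{\sel}|$ via \lemref{l:deforked} is identical to the paper's. The constants multiply to $4$ in both versions.
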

\begin{proof}
Observe that \pred\ assigns labels to nodes in $V \setminus L^+_{\sel}$
so as to minimize the resulting cutsize given the labels in the query set
$L^+_{\sel}$. We can then invoke~\cite[Lemma~1]{gb09}, which bounds
the number of mistakes made by the mincut strategy in terms of the
functions $\Psiinv$ and the cutsize. This yields
\[
    m_{\pred}(L^+_{\sel}, \by)
\!\overset{\text{\cite[Lemma~1]{gb09}}}{\le}\!
    2\,\Psiinv(L^+_{\sel})\,\Phi_T(\by)
\overset{\text{(A)}}{\le}
    2\,\Psiinv(L_{\sel})\,\Phi_T(\by)
\!\!\overset{\text{(\thmref{th:psi})}}{\le}\!\!
    4\,\Psiinv(L)\,\Phi_T(\by)~.
\]
Inequality~(A) holds because $L_{\sel} \subseteq L_{\sel}^+$,
and thus $T \setminus L_{\sel}^+$ has connected components of
smaller size than $L_{\sel}$. In order to apply \thmref{th:psi},
we need the conditon $|L| \le \tfrac{1}{4}|L_{\sel}|$, which
follows from a simple combination of \lemref{l:deforked} and our assumptions:
$|L| \le \tfrac{1}{8}|L_{\sel}^+| \le \tfrac{1}{4}|L_{\sel}|$.
\end{proof}
\begin{remark}
%
%
A mincut algorithm exists which efficiently predicts even when the 
query set $L$ is not 0-forked (thereby gaining a factor of 2 in the cardinality
of the competing query sets $L$ -- see Theorem~\ref{t:together} and 
Corollary~\ref{c:promised}). 
This algorithm is a "batch" variant of the TreeOpt algorithm analyzed in~\cite{cgv09}. 
The algorithm can be implemented in such a way that the total time
for predicting $|V|-|L|$ labels is $\scO(|V|)$.
\end{remark}

\newcommand{\selk}{\sel\ensuremath{\star}}

\subsection{Automatic calibration of the number of queries}\label{s:q_m}
A key aspect to the query selection task is deciding when to stop 
asking queries. Since the more queries are asked the less mistakes are made afterwards,
a reasonable way to deal with this trade-off is to minimize
the number of queries issued during the selection phase plus the 
number of mistakes made during the prediction phase.
For a given pair $A = \langle S,P \rangle$ of prediction and selection algorithms, 
we denote by $[q+m]_A$ the sum of queries made by $S$ and prediction mistakes
made by $P$. Similarly to $m_A$ introduced in \secref{s:algo_an}, $[q+m]_A$
has to scale with the cutsize $\Phi_T(\by)$ of the labeled tree $(T,\by)$
under consideration.

As a simple example of computing $[q+m]_A$, consider a line graph $T = (V,E)$.
Since each query set on $T$ is 0-forked, \thmref{th:lb} and 
\corref{th:bto_f} ensure that an optimal strategy for selecting
the queries in $T$ is choosing a sequence of nodes such that
the distance between any pair of neighbor nodes in $L$ is equal.
The total number of mistakes that can be forced on $V\setminus L$ is,
up to a constant factor, $\bigl(|V|/|L|\bigr)\Phi_{T}(\by)$. 
Hence, the optimal value of $[q+m]_A$ is about 
\begin{equation}\label{e:line}
|L|+\frac{|V|}{|L|}\Phi_{T}(\by)~.
\end{equation}
Minimizing the above expression over $|L|$ clearly requires knowledge of 
$\Phi_{T}(\by)$, which is typically unavailable.
In this section we investigate a method for choosing the number of queries
when the labeling is known to be sufficiently regular, that is when a
bound $K$ is known on the cutsize $\Phi_T(\by)$ induced by the adversarial
labeling.\footnote
{
In~\cite{acdfmsz07} a labeling $\by$ of a graph $G$ is said to be
$\alpha$-balanced if, after the elimination of all $\phi$-edges, each
connected component of $G$ is not smaller than $\alpha |V|$ for some
known constant $\alpha \in (0,1)$. In the case of labeled trees,
the $\alpha$-balancing condition is stronger than our regularity
assumption. This is because any $\alpha$-balanced labeling $\by$ implies
$\Phi_T(\by) \leq 1/\alpha-1$. In fact, getting back to the line graph
example, we immediately see that, if $\by$ is $\alpha$-balanced, then
the optimal number of queries $|L|$ is order of $\sqrt{|V|(1/\alpha-1)}$,
which is also $\inf_A [q+m]_A$.
}



We now show that when a bound $K$ on the cutsize is known, a simple
modification of \sel (we call it \selk) exists which optimizes the
$[q+m]_A$ criterion. This means that the combination of \selk\ and \pred\ can
trade-off optimally (up to constant factors) queries against mistakes.

Given a selection algorithm $S$ and a prediction algorithm $P$,
define $[q+m]_{\langle S,P \rangle}$ by
\[
    [q+m]_{\langle S,P \rangle} = \min_{Q \ge 1} \bigl(Q + m_P(L_{S(Q)},K) \bigr)
\]
where $L_{S(Q)}$ is the query set output by $S$ given query budget $Q$, and
$m_P(L_{S(Q)},K)$ is the maximum number of mistakes made by $P$ with query set $L_{S(Q)}$
on any labeling $\by$ with $\Phi_T(\by) \le K$ ---see definition in \secref{s:algo_an}.
Define also $[q+m]_{\opt} = \inf_{S,P} [q+m]_{\langle S,P \rangle}$,
where $\opt = \langle S^*,P^* \rangle$ is an optimal pair of selection
and prediction algorithms.
If \sel\ knows the size of the query set $L^*$ selected by $S^*$, so
that \sel\ can choose a query budget $Q = 8|L^*|$, then a direct
application of \thmref{t:together} guarantees that
$
|L^+_{\sel}| + m_{\pred}(L_{\sel}^+,K) \le 10\, [q+m]_{\opt}
$.
We now show that \selk, the announced modification of \sel, can efficiently
search for a query set size $Q$ such that 
$
    Q + m_{\pred}(L_{\sel(Q)}^+,K) = \scO\bigl([q+m]_{\opt}\bigr)
$
when only $K$, rather than $|L^*|$, is known.
In fact, \thmref{th:lb} and \corref{th:bto_f} ensure that
$m_{\pred}(L_{\sel}^+,K) = \Theta\bigl(\Upsilon(L_{\sel}^+,K)\bigr)$. 
When $K$ is given as side information, \selk\ can operate as follows.
For each $t \le |V|$, the algorithm builds the query set $L^+_t$ and computes 
$\Upsilon(L^+_t,K)$. Then it finds the smallest value $t^*$ minimizing $t + \Upsilon(L^+_t,K)$
over all $t \le |V|$, and selects $L_{\selk} \equiv L_{t^*}$.
We stress that the above is only possible because the algorithm can estimate
within constant factors its own future mistake bound (\thmref{th:lb} and \corref{th:bto_f}),
and because the combination of \sel\ and \pred\ is competitive against all query sets
whose size is a constant fraction of $|L^+_{\sel}|$ ---see \thmref{t:together}.
Putting together, we have shown the following result.
\begin{theorem}
For all trees $(T,\by)$, for all cutsize budgets $K$, and for all labelings $\by$
such that $\Phi_T(\by) \le K$, the combination of \selk\ and \pred\ achieves
$
    |L_{\selk}| + m_{\pred}(L_{\selk}^+,K) = \scO\bigl([q+m]_{\opt}\bigr)
$
when $K$ is given to \selk\ as input.
\end{theorem}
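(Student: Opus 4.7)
The plan is to argue that the search performed by $\selk$ over $t \le |V|$ is essentially ``tight'' because (i)~$\Upsilon(L_t^+, K)$ is a provably accurate proxy for the mistakes $\pred$ will make, and (ii)~the machinery already established in \thmref{t:together} lets us compare, at a value of $t$ slightly larger than the optimal query budget, the resulting $\Upsilon$ against the mistakes of any competitor. Concretely, let $\opt = \langle S^*,P^*\rangle$ and let $Q^* \ge 1$ achieve the infimum defining $[q+m]_{\opt}$, so that $L^* = L_{S^*(Q^*)}$ satisfies $|L^*| \le Q^*$ and $m_{P^*}(L^*,K) \le [q+m]_{\opt} - Q^*$. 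Write $M^* = m_{P^*}(L^*,K)$. Since any algorithm can be forced to make $\lb(L^*,K)$ mistakes on some labeling with cutsize $\le K$, we have $\lb(L^*,K) \le M^*$.

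First I would handle the accounting on the $\selk$ side. By construction, $|L_{\selk}| = t^*$ where $t^* = \arg\min_{t \le |V|}\bigl(t + \Upsilon(L_t^+, K)\bigr)$. By \lemref{l:ub_bto} applied to the $0$-forked set $L_{t^*}^+$,
\[
    m_{\pred}(L_{\selk}^+, K) \le \Upsilon(L_{\selk}^+, K) = \Upsilon(L_{t^*}^+, K),
\]
so it suffices to bound $t^* + \Upsilon(L_{t^*}^+, K)$ by $\scO\bigl([q+m]_{\opt}\bigr)$.

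Next I would plug in a specific competitor value of $t$, namely $t = 8\,Q^*$ (and if $8Q^* > |V|$ the conclusion is trivial, since querying all of $V$ gives zero mistakes at cost $|V| \le 8 Q^*$). The key observation is that $L_t^+$ here plays the role of $L_{\sel}^+$ in the proof of \thmref{t:together}, while $L^*$ plays the role of the competitor $L$; since $|L_t^+| \ge |L_t| = t = 8Q^* \ge 8|L^*|$, the hypothesis $|L^*| \le \tfrac{1}{8}|L_t^+|$ is met. Reproducing the chain of inequalities in the proof of \thmref{t:together} (specifically: $\Upsilon(L_t^+,K) \le \Upsilon(L_t,K) \le 5\,\Upsilon((L^*)^+,K) \le 10\,\lb((L^*)^+,K) \le 10\,\lb(L^*,K)$, via \lemref{l:upsilon}(a) and \thmref{th:lb}), we get $\Upsilon(L_t^+, K) \le 10\,M^*$. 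By minimality of $t^*$,
\[
    t^* + \Upsilon(L_{t^*}^+, K) \;\le\; t + \Upsilon(L_t^+, K) \;\le\; 8\,Q^* + 10\,M^* \;\le\; 10\,[q+m]_{\opt}.
\]
Combining with the first step yields $|L_{\selk}| + m_{\pred}(L_{\selk}^+, K) \le 10\,[q+m]_{\opt}$.

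The only non-routine point is verifying that the proof chain of \thmref{t:together} applies verbatim when the ``$\sel$ set'' is the prefix $L_t$ produced after exactly $t = 8Q^*$ steps of $\sel$, rather than after all intended selections. This is immediate because \lemref{l:upsilon} is stated and proven for the incrementally built set $L_{\sel}$ at any stage of the construction, so the chain depends only on $L_t$ being the output of $\sel$ after $t$ iterations and on the size hypothesis $|L^*| \le \tfrac{1}{8}|L_t^+|$. The minor bookkeeping concerns (the possibility that $8Q^*$ exceeds $|V|$, and the fact that $\selk$ needs only time polynomial in $|V|$ to perform the minimization since $\Upsilon(L_t^+, K)$ can be read off the connected-component sizes of $T \setminus L_t^+$) are easy to dispatch in a sentence.
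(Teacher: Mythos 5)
Your proposal is correct and follows essentially the same route as the paper, which only sketches this argument in the paragraph preceding the theorem: bound $m_{\pred}(L^+_{t},K)$ by $\Upsilon(L^+_t,K)$ via \lemref{l:ub_bto}, evaluate the minimized objective at the competitor budget $t=8Q^*$ using the inequality chain of \thmref{t:together}, and conclude by minimality of $t^*$. Your write-up merely makes explicit the bookkeeping (the case $8Q^*>|V|$, the fact that \lemref{l:upsilon} applies to any prefix $L_t$, and $\lb(L^*,K)\le m_{P^*}(L^*,K)$) that the paper leaves implicit.
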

Just to give a few simple examples of how \selk\ works, consider a star graph. 
It is not difficult to see that in this case $t^*=1$ {\em independent} of $K$, 
i.e., \selk\ always selects the center of the star, which is intuitively the optimal choice.
If $T$ is the ``binary system'' mentioned in the introduction, then $t^*=2$
and \selk\ always selects the centers of the two stars, again independent of $K$.
At the other extreme, if $T$ is a line graph, then \selk\ picks the query nodes
in such a way that the distance between two consecutive nodes of $L$ in $T$ is 
(up to a constant factor) equal to $\sqrt{|V|/K}$. Hence $|L| = \Theta(\sqrt{|V|K})$, 
which is the minimum of~(\ref{e:line}) over $|L|$ when $\Phi_{T}(\by) \leq K$.

\section{On the prediction of general graphs}\label{s:graph}
In this section we provide a general lower bound for prediction on arbitrary
labeled graphs $(G,\by)$. We then contrast this lower bound to some results
contained in Afshani et al.~\cite{acdfmsz07}. 

Let $\Phi^R_G(\by)$ be the sum of the effective resistances~(see, e.g., \cite{LP09})
on the $\phi$-edges of $G = (V,E)$. The theorem below shows that any prediction
algorithm using any query set $L$ such that $|L| \le \tfrac{1}{4}|V|$ makes at
least order of $\Phi^R_G(\by)$ mistakes.
This lower bound holds even if the algorithm is allowed to use a randomized
adaptive strategy for choosing the query set $L$, that is, a randomized strategy
where the next node of the query set is chosen after receiving the
labels of all previously chosen nodes.
\begin{theorem}
\label{th:lb_graphs}
Given a labeled graph $(G,\by)$, for all $K \le |V|/2$, 
there exists a randomized labeling strategy such that
for all prediction algorithms $A$ choosing a query set of size $|L| \le \tfrac{1}{4}|V|$
via a possibly randomized adaptive strategy, the expected number of mistakes made
by $A$ on the remaining nodes $V \setminus L$ is at least $K/4$, while $\Phi^R_G(\by) \le K$.
\end{theorem}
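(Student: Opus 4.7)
My approach would be to apply Yao's minimax principle and reduce the statement to the construction of a distribution over labelings $\by$ such that, first, $\Phi^R_G(\by) \le K$ almost surely, and second, against every deterministic adaptive query-and-predict strategy $A$ with $|L| \le |V|/4$, the expected number of mistakes on $V \setminus L$ is at least $K/4$. My candidate distribution is the product measure: for each $v \in V$ independently, set $y_v = -1$ with probability $p$ and $y_v = +1$ otherwise, where $p$ is a small constant multiple of $K/|V|$.

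The first key step would bound $\bE[\Phi^R_G(\by)]$ via Foster's classical theorem, which asserts that $\sum_{(i,j) \in E} R_{ij} = |V|-1$ for any connected graph on $|V|$ nodes with unit edge resistances. By independence, each edge becomes a $\phi$-edge with probability $2p(1-p)$, so by linearity of expectation
\[
    \bE\bigl[\Phi^R_G(\by)\bigr] \;=\; 2p(1-p)\sum_{(i,j)\in E} R_{ij} \;=\; 2p(1-p)(|V|-1)~,
\]
and tuning $p = K/(2(|V|-1))$ yields $\bE[\Phi^R_G(\by)] \le K$. The assumption $K \le |V|/2$ keeps $p \le 1/2$.

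The second key step would bound the expected mistakes using independence: because labels are drawn iid, the labels of unqueried nodes remain independent $\mathrm{Bernoulli}(p)$ conditionally on the transcript of queried labels that $A$ has observed. This property is what lets the argument go through against \emph{adaptive} query strategies. Thus any predictor errs on an unqueried node with probability at least $\min\{p,1-p\} = p$, and since $|V \setminus L| \ge \tfrac{3}{4}|V|$, the expected mistake count is at least $\tfrac{3}{4}|V|\cdot p \ge 3K/8 > K/4$.

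The principal obstacle is upgrading the expected-value bound $\bE[\Phi^R_G(\by)] \le K$ to the almost-sure bound demanded by the statement. I would handle this by reducing $p$ by a constant factor so that $\bE[\Phi^R_G] \le K/2$, and then either (a) applying a Chernoff--Azuma concentration argument to the martingale that reveals the $y_v$'s one at a time, using that $\Phi^R_G(\by)$ is a sum of bounded contributions $R_{ij}X_{ij}$ with $X_{ij}\in\{0,1\}$ and $R_{ij}\in[0,1]$, so that $\{\Phi^R_G > K\}$ has small probability, or (b) mixing the iid labeling with a small probability of the trivial all-$(+1)$ labeling (which satisfies $\Phi^R_G = 0$) and conditioning on the favorable event. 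Either route requires careful tracking of constants to preserve the $K/4$ mistake lower bound after conditioning on $\{\Phi^R_G \le K\}$; this balancing act between the two constraints is the delicate point of the proof, and is also why the clean identity $R_{ij} = \Pr_T[(i,j)\in T]$ for a uniformly random spanning tree $T$ ---which in principle offers an appealing reduction to \thmref{th:lb}--- needs to be used with care, since the adversary in the tree lower bound is allowed to depend on $T$, whereas here the labeling must be fixed before $A$ chooses its queries.
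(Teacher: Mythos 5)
The paper explicitly omits its own proof of this theorem, so there is nothing to compare against line by line; I can only assess your argument on its merits. The sound parts: Foster's identity does give $\bE\bigl[\Phi^R_G(\by)\bigr]=2p(1-p)(|V|-1)\le K$ for your choice of $p$, and the product structure correctly neutralizes adaptivity --- conditioned on any query transcript the unqueried labels are still independent Bernoulli($p$), so every predictor errs with probability at least $p$ on each of the $\ge\tfrac34|V|$ unqueried nodes, giving $\ge 3K/8$ expected mistakes. The gap is exactly where you locate it, but neither of your proposed repairs works as described. Repair (a), concentration, fails outright: take $G$ to be a star on $n=|V|$ nodes, so every edge has effective resistance $1$. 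With probability $p$ the hub is labeled $-1$, and then with probability close to one at least $2K$ leaves are labeled $+1$ (say for $K\le n/4$), so $\Phi^R_G(\by)\ge 2K$. The Doob martingale that reveals labels node by node has a difference of $\sum_{j:(v,j)\in E}R_{vj}=n-1$ at the hub, so Azuma gives nothing; worse, this single bad event of probability $p$ and magnitude about $n$ accounts for half of the entire expectation $\bE[\Phi^R_G(\by)]$, so the budget violation is not a tail event that can be truncated away.

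Repair (b) is where the real work lies and you have not done it: conditioning on $\{\Phi^R_G(\by)\le K\}$ destroys the product structure your mistake bound rests on (on the star, the hub's conditional label becomes essentially determined by a few queried leaves), and the constant slack is too thin --- your unconditional bound is $3K/8$, so you may shrink $p$ by at most a factor $3/2$ before dropping below $K/4$, leaving no room to absorb the loss from conditioning, which for small $K$ can already be a constant number of expected mistakes. The natural way to close the gap is to make $\Phi^R_G(\by)\le K$ hold \emph{surely} by construction rather than in expectation: by Foster's identity $\sum_{v}\sum_{j:(v,j)\in E}R_{vj}=2(|V|-1)$, at least half of the nodes $v$ satisfy $\sum_{j:(v,j)\in E}R_{vj}\le 4$; restrict the random labels to such ``light'' nodes (e.g.\ a random subset of controlled cardinality receiving label $-1$ against an all-$+1$ background), so that every labeling in the support respects the resistance budget deterministically, and then rerun your counting argument on the unqueried light nodes. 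Until the almost-sure budget constraint is enforced in some such way, the proof is incomplete.
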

%
%
The above lower bound (whose proof is omitted) appears to contradict
an argument by Afshani et al.~\cite[Section~5]{acdfmsz07}. This argument establishes
that for any $\varepsilon > 0$ there exists a randomized algorithm using at most
$K\ln(3/\varepsilon) + K\ln(|V|/K) + \scO(K)$ queries on any given graph $G = (V,E)$
with cutsize $K$, and making at most $\varepsilon|V|$ mistakes on the remaining
vertices. This contradiction is easily obtained through the following simple
counterexample: assume $G$ is a line graph where all node labels are $+1$ but
for $K = o\bigl(|V|/\ln|V|\bigr)$ randomly chosen nodes, which are also given random labels.
For all $\varepsilon = o\bigl(\tfrac{K}{|V|}\bigr)$, the above argument implies
that order of $K\ln|V| = o(|V|)$ queries are sufficient to make at most
$\varepsilon|V| = o(K)$ mistakes on the remaining nodes, among which $\Omega(K)$
have random labels ---which is clearly impossible.

\section{Efficient Implementation}
\label{s:impl}
In this section we describe an efficient implementation of \sel\ 
and $\pred$.
We will show that the total time needed for selecting $Q$ queries is $\scO(|V| \log Q)$,
the total time for predicting $|V|-Q$ nodes is  $\scO(|V|)$, and that
the overall memory space is again $\scO(|V|)$.

In order to locate the largest subtree of $T \setminus L_{t-1}$, the algorithm
maintains a priority deque~\cite{kv03} $D$ containing at most $Q$ items.
This data-structure enables to find and eliminate the item
with the smallest (resp., largest) key in time $\scO(1)$ (resp., time $\scO(\log Q))$. 
In addition, the insertion of a new element takes time $\scO(\log Q)$.

Each item in $D$ has two records: a reference to a node in $T$ and the
priority key associated with that node. 
Just before the selection of the\footnote
{
If $t=1$ the priority deque $D$ is empty.
} 
$t$-th query node $i_t$, the $Q$ references point to nodes contained in the
$Q$ largest subtrees in $T \setminus L_{t-1}$, while the corresponding keys
are the sizes of such subtrees.
Hence at time $t$ the item $top$ of $D$ having the largest key points to a 
node in $T^t_{\max}$.

First, during an initialization step, \sel\ creates, for each edge $(i,j) \in E$, 
a directed edge $[i,j]$ from $i$ to $j$ and the twin directed edge $[j,i]$ from
$j$ to $i$. During the construction of $L_{\sel}$ the algorithm also stores 
and maintains the current size $\sigma(D)$ of $D$, i.e., the total
number of items contained in $D$.
We first describe the way $\sel$ finds node $i_t$ in $T^t_{\max}$. Then
we will see how $\sel$ can efficiently augment the query set $L_{\sel}$ to
obtain $L^+_{\sel}$.

Starting from the node $r$ of $T^t_{\max}$ referred to by\footnote
{
In the initial step $t=1$ (i.e., when $T^t_{\max} \equiv T$) node $r$ can be
chosen arbitrarily .
} 
$D$,
\sel\ performs a depth-first visit of $T^t_{\max}$, followed by the elimination of the item
with the largest key in $D$. For the sake of simplicity,
consider $T^t_{\max}$ as rooted at node $r$.
Given any edge $(i,j)$, we let $T_i$ and $T_j$ be the two subtrees 
obtained from $T^t_{\max}$ after removing edge $(i,j)$, where $T_i$ contains
node $i$, and $T_j$ contains node $j$.
During each backtracking step of the depth-first visit from a node $i$ to a node $j$, 
\sel\ stores the number of nodes $|T_i|$ contained in $T_i$. 
This number gets associated with $[j,i]$. Observe that this task can be accomplished 
very efficiently, since $|T_i|$ is equal to $1$ plus the number of nodes
of the union of $T_{c(i)}$ over all children $c(i)$ of $i$. 
These numbers can be recursively calculated by summing the size values 
that \sel\ associates with all direct edges $[i,c(i)]$ in the previous backtracking
steps. Just after storing the value $|T_i|$, the algorithm also
stores $|T_j|=|T^t_{\max}|-|T_i|$ and associates this value with the twin directed edge $[i,j]$. 
The size of $T^t_{\max}$ is then stored in $D$ as the key record of the 
pointer to node $r$.

It is now important to observe that the quantity $\sigma(T^t_{\max},i)$ 
used by $\sel$ (see Section \ref{s:algo}) is simply
the largest key associated with the directed edges $[i,j]$ 
over all $j$ such that $(i,j)$ is an edge of $T^t_{\max}$.
Hence,  a new depth-first visit is enough to find 
in time $\scO(|T^t_{\max}|)$ the $t$-th node
$i_t=\arg\min_{i \in T^t_{\max}}\sigma(T^t_{max},i)$
selected by $\sel$.
Let $N(i_t)$ be the set of all nodes adjacent to node $i_t$ in $T^t_{\max}$. 
For all nodes $i' \in N(i_t)$, \sel\ compares $|T_{i'}|$ to the smallest key 
$bottom$ stored in $D$. We have three cases:
%
\begin{enumerate}
\item If $|T_{i'}| \le bottom$ and $\sigma(D) \ge Q-t$
then the algorithm does nothing, since $T_{i'}$ (or subtrees thereof) 
will never be largest in the subsequent steps of the construction of $L_{\sel}$, 
i.e., there will not exist any node $i_{t'}$ with $t'>t$ such that $i_{t'} \in T_{i'}$.
\item If $|T_{i'}| \le bottom$ and $\sigma(D) < Q-t$, or if
$|T_{i'}| > bottom$ and $\sigma(D) < Q$ then
\sel\ inserts a pointer to $i'$ together with the associated
key $|T_{i'}|$. Note that, since $D$ is not full (i.e., $\sigma(D) < Q)$,
the algorithm need not eliminate any item in $D$.
\item If $|T_{i'}| > bottom$ and $\sigma(D) = Q$ then \sel\ eliminates 
from $D$ the item having the smallest key,
and inserts a pointer to $i'$, together with the associated key $|T_{i'}|$.  
\end{enumerate}
%
Finally, \sel\ eliminates node $i_t$ and all edges (both undirected and directed)
incident to it. Note that this elimination implies that we
can easily perform a depth-first visit within $T^s_{\max}$ for each $s \le Q$,
since $T^s_{\max}$ is always completely disconnected from the rest
of the tree $T$.

In order to turn $L_{\sel}$ into $L^+_{\sel}$, the algorithm proceeds incrementally,
using a technique borrowed from~\cite{cgv09}.
Just after the selection of the first node $i_1$, 
a depth-first visit starting from $i_1$ is performed. 
During each backtracking step of this visit, the algorithm associates with
each edge $(i,j)$, the closer node to $i_1$ between the two nodes $i$ and $j$. 
In other words, \sel\ assigns a direction to each undirected edge $(i,j)$ so as to be able 
to efficiently find the path connecting each given node $i$ to $i_1$. 
When the $t$-th node $i_t$ is selected, \sel\ 
follows these edge directions from $i_t$ towards $i_1$.
Let us denote by $\pi(i,j)$ the path connecting node $i$ to node $j$. 
During the traversal of $\pi(i_1,i_t)$, the algorithm assigns a special
mark to each visited node, until the algorithm reaches the first node 
$j \in \pi(i_1,i_t)$ which has already been marked. 
Let $\eta(i,L)$ be the maximum number of edge disjoint paths connecting
$i$ to nodes in the query set $L$. Observe that all nodes $i$
for which $\eta(i,L_t) > \eta(i,L_{t-1})$ must necessarily belong
to $\pi(i_t,j)$. We have $\eta(i_t,L_t)=1$, and $\eta(i,L_t)=2$, for all
internal nodes $i$ in the path $\pi(i_t,j)$. Hence, $j$ 
is the unique node that we may need to add as a new fork node 
(if $j \not\in \fork(L_{t-1})$). 
In fact, $j$ is the unique node such that the number of edge-disjoint
paths connecting it to query nodes may increase, and be actually larger than $2$. 

Therefore if $j \in L^+_{t-1}$ we need not add any fork node during the 
incremental construction of $L^+_{\sel}$. 
On the other hand, if $j \not\in L^+_{t-1}$ then $\eta(i,L_{t-1}) = 2$, 
which implies $\eta(i,L_{t})=3$. This is the case when $\sel$ views $j$
as new fork node to be added to the query set $L_{\sel}$ under consideration.

In order to bound the total time required by \sel\ for selecting 
$Q$ nodes, we rely on \lemref{l:split}, showing that
$|T^t_{\max}| \leq 2|V|/t$. 
The two depth-first visits performed for each node $i_t$ take
$\scO(|T^t_{\max}|)$ steps. Hence the overall running time spent on the depth-first visits
is $ \scO(\sum_{t \le Q} 2|V|/t) = \scO(|V| \log Q)$. The total time spent for incrementally finding 
the fork nodes of $L_{\sel}$ is linear in the number of nodes marked by the 
algorithm, which is equal to $|V|$. 
Finally, handling the priority deque $D$ takes $|V|$ times the worst-case time 
for eliminating an item with the smallest (or largest) key or adding a new item.
This is again $\scO(|V| \log Q)$.

We now turn to the implementation of the prediction phase. 
\pred\ operates in two phases. 
In the first phase, the algorithm performs a depth-first visit of each hinge-tree $\scT$, 
starting from each connection node (thereby visiting the nodes of all 1-hinge-tree once,
and the nodes of all 2-hinge-tree twice).
During these visits, we add to the nodes a tag containing (i) the label of node $i_{\scT}$ 
from which the depth-first visit started, and (ii) the distance between $i_{\scT}$ and the
currently visited node. 
In the second phase, we perform a second depth-first visit, this time on the whole tree $T$.
During this visit, we predict each node $i \in V \setminus L$
with the label coupled with smaller distance stored in the tags of\footnote
{
If $i$ belongs to a 1-hinge-tree, we simply predict $y_i$ with the unique 
label stored in the tag.
} 
$i$.
The total time of these visits is linear in $|V|$ since each node of $T$ gets 
visited at most $3$ times.

\section{Conclusions and ongoing work}\label{s:concl}
The results proven in this paper characterize, up to constant factors,
the optimal algorithms for adversarial active learning on trees in two
main settings. In the first setting the goal is to minimize the number of
mistakes on the non-queried vertices under a certain query budget. In
the second setting the goal is to minimize the sum of queries and mistakes
under no restriction on the number of queries.

An important open question is the extension of our results to the general
case of active learning on graphs. While a direct characterization of
optimality on general graphs is likely to require new analytical tools,
an alternative line of attack is reducing the graph learning problem to the
tree learning problem via the use of spanning trees. Certain types of
spanning trees, such as random spanning trees, are known to summarize
well the graph structure relevant to passive learning ---see, e.g.,
\cite{cgv09,CGVZ10a,VCGZ12}.
In the case of active learning, however, we want good query sets on the graph
to correspond to good query sets on the spanning tree, and random spanning
trees may fail to do so in simple cases. For example, consider a set of
$m$ cliques connected through bridges, so that each clique is connected
to, say, $k$ other cliques. The breadth-first spanning tree of this graph
is a set of connected stars. This tree clearly reveals a query set
(the star centers) which is good for regular labelings (cfr., the binary
system example of \secref{s:intro}). On the other hand, for certain choices
of $m$ and $k$ a random spanning tree has a good probability of hiding
the clustered nature of the original graph, thus leading to the selection of
bad query sets.

In order to gain intuition about this phenomenon, we are currently running
experiments on various real-world graphs using different types of spanning
trees, where we measure the number of mistakes made by our algorithm (for 
various choices of the budget size) against common baselines.

We also believe that an extension to general graphs of our algorithm does actually 
exist.
However, the complexity of the methods employed in~\cite{gb09} suggests that
techniques based on minimizing $\Psiinv$ on general graphs are computationally
very expensive.

Finally, it would be interesting to combine active learning techniques on
the nodes of a graph with those for predicting links (e.g., \cite{cgvz12a,cgvz12b}).

\smallskip\noindent
\textbf{Acknowledgments.}
This work was supported in part
by Google Inc. through a Google Research Award and
by the PASCAL2 Network of Excellence under EC grant no.\ 216886.
This publication only reflects the authors' views.


\end{document}